\newcommand{\alg}{\texttt{REBEL}}
\newcommand{\piref}{\mu}
\newcommand{\tldr}{\emph{TL;DR}}
\newcommand{\Ebb}{\mathbb{E}}
\newcommand{\eps}{\epsilon}
\newcommand{\pimw}{\pi_{\mathsf{MW}}}
\newcommand{\hpi}{\widehat{\pi}}
\newcommand{\DG}{\mathsf{DG}}
\newcommand{\Xc}{\mathcal{X}}
\newcommand{\Yc}{\mathcal{Y}}
\newcommand{\Unif}{\mathsf{Unif}}
\newcommand{\algcommentlight}[1]{\textcolor{perfblue}{\transparent{0.8}\small{\texttt{\textbf{//\hspace{2pt}#1}}}}}
\definecolor{perfblue}{RGB}{64, 114, 175}
\title{\centering\texttt{REBEL}: Reinforcement Learning  \\ via Regressing Relative Rewards}
\author{%
Zhaolin Gao \\
Cornell University\\
\texttt{zg292@cornell.edu} \\
\And
Jonathan D. Chang \\
Cornell University\\
\texttt{jdc396@cornell.edu} \\
\AND
Wenhao Zhan \\
Princeton University \\
\texttt{wenhao.zhan@princeton.edu} \\
\And
Owen Oertell \\
Cornell University\\
\texttt{ojo2@cornell.edu} \\
\And
Gokul Swamy \\
Carnegie Mellon University \\
\texttt{gswamy@andrew.cmu.edu} \\
\And
Kianté Brantley \\
Harvard University\\
\texttt{kdbrantley@g.harvard.edu} \\
\And
Thorsten Joachims \\
Cornell University\\
\texttt{tj@cs.cornell.edu} \\
\And
J. Andrew Bagnell \\
Aurora Innovation, CMU \\
\texttt{bagnell2@andrew.cmu.edu} \\
\AND
Jason D. Lee \\
Princeton University \\
\texttt{jasonlee@princeton.edu} \\
\And
Wen Sun \\
Cornell University\\
\texttt{ws455@cornell.edu} \\
}
\begin{document}

\maketitle

\begin{abstract}
While originally developed for continuous control problems, Proximal Policy Optimization (PPO) has emerged as the work-horse of a variety of reinforcement learning (RL) applications, including the fine-tuning of generative models. Unfortunately, PPO requires multiple heuristics to enable stable convergence (e.g. value networks, clipping), and is notorious for its sensitivity to the precise implementation of these components. In response, we take a step back and ask what a \textit{minimalist} RL algorithm for the era of generative models would look like. We propose \texttt{REBEL}, an algorithm that cleanly reduces the problem of policy optimization to regressing the \textit{relative reward} between two completions to a prompt in terms of the policy, enabling strikingly lightweight implementation. In theory, we prove that fundamental RL algorithms like Natural Policy Gradient can be seen as variants of \texttt{REBEL}, which allows us to match the strongest known theoretical guarantees in terms of convergence and sample complexity in the RL literature. \texttt{REBEL} can also cleanly incorporate offline data and be extended to handle the intransitive preferences we frequently see in practice. Empirically, we find that \texttt{REBEL} provides a unified approach to language modeling and image generation with stronger or similar performance as PPO and DPO, all while being simpler to implement and more computationally efficient than PPO. When fine-tuning Llama-3-8B-Instruct, \alg{} achieves strong performance in AlpacaEval 2.0, MT-Bench, and Open LLM Leaderboard. 
\end{abstract}

\section{Introduction}
\label{sec:intro}

The generality of the reinforcement learning (RL) paradigm is striking: from continuous control problems \citep{kalashnikov2018scalable} to, more recently, the fine-tuning of generative models \citep{stiennon2022learning, ouyang2022training}, RL has enabled concrete progress across a variety of decision-making tasks. Specifically, when it comes to fine-tuning generative models, Proximal Policy Optimization (PPO, \citet{ppo}) has emerged as the de-facto RL algorithm of choice, from language models (LLMs) \citep{ziegler2020finetuning, stiennon2022learning, ouyang2022training, touvron2023llama} to generative image models \citep{black2023training, fan2024reinforcement, oertell2024rl}.

If we take a step back however, it is odd that we are using an algorithm designed for optimizing two-layer networks for continuous control tasks from scratch, even though we are now fine-tuning generative models with billions of parameters. In the continuous control setting, the randomly initialized neural networks and the possible stochasticity in the dynamics necessitate variance reduction through a learned value function as a baseline \citep{schulman2015high}, while clipping updates is important to limit distribution shift from iteration to iteration \citep{kakade2002approximately}. This means that when applied to generative model fine-tuning, we need to store four models in memory simultaneously (the policy, the reference policy, the critic, and the reward model), each with billions of parameters. Furthermore, we often add a KL regularization to the base model for fine-tuning, making explicit clipping unnecessary nor advisable, as pointed out by \citet{ahmadian2024basics}. Even outside of the generative modeling context, PPO is notorious for the wide range of performances measured, with differences being attributed to seemingly inconsequential implementation details \citep{henderson2019deep, engstrom2020implementation}. This begs the question: \textbf{\textit{are there simpler algorithms that better scale to modern RL applications?}}

\begin{figure}
    \centering
    \includegraphics[width=\textwidth,trim={0 0 0 0}]{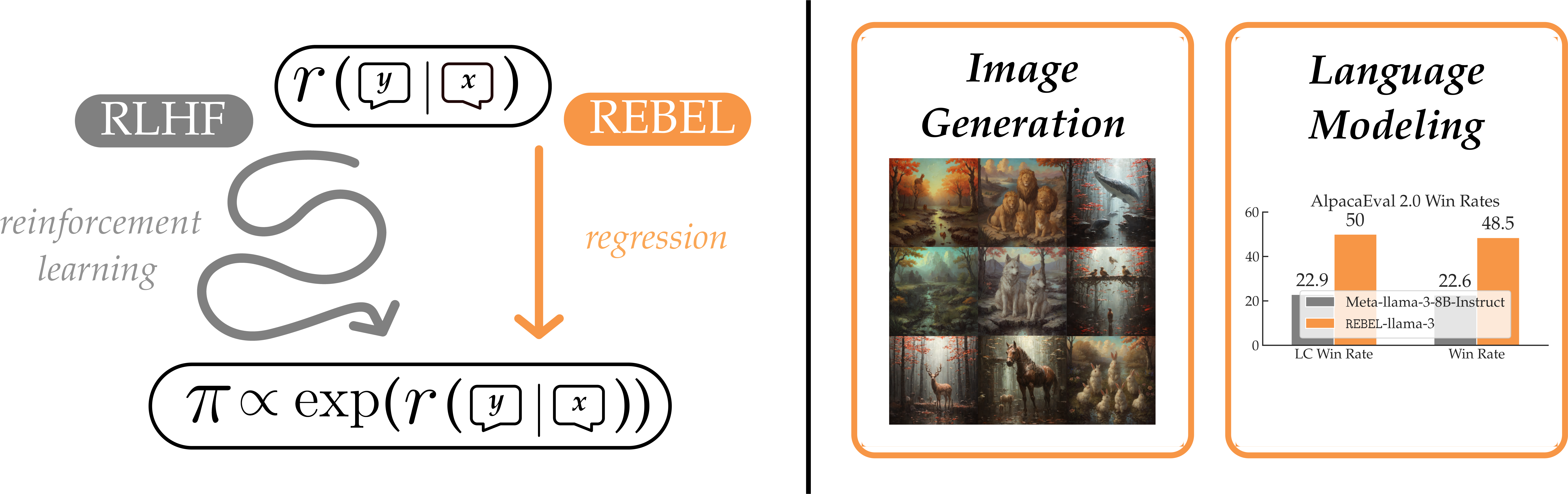}
    \caption{We present \alg{}: a simple and scalable RL algorithm that performs policy optimization via \emph{iteratively regressing the difference in rewards in terms of the policy}, allowing us to eliminate much of the complexity (e.g. value functions, clipping) of algorithms like PPO \citep{ppo}. We apply \alg{} to problems in both image generation and language modeling and find that despite its conceptual and implementation-level simplicity, \alg{} is able to match or sometimes outperform the performance of PPO while out-performing purely offline techniques like DPO \citep{rafailov2023direct}. \alg{} also achieves strong performance on common benchmarks such as AlpacaEval when fine-tuning a Llama-3-8B model.} 
\end{figure}

Our answer is \texttt{REBEL}: an algorithm that \textit{reduces the problem of RL to solving a sequence of squared loss regression problems} on iteratively collected datasets. Each regression problem directly uses the policy to predict the difference in rewards. This allows us to eliminate the complexity of using value functions, avoids heuristics like clipping, and scales easily to problems in both language modeling and image generation. Our key insight is that \textbf{\textit{a regressor that can predict the difference in rewards between trajectories in a dataset implicitly captures an improved policy.}}

Rather than being a mere heuristic, \texttt{REBEL} comes with strong guarantees in theory and can be seen as a strict generalization of classical techniques (e.g., NPG) in reinforcement learning. Furthermore, \texttt{REBEL} cleanly incorporates offline datasets when available, can be extended to robustly handle intransitive preferences \citep{swamy2024minimaximalist}, empirically out-performs techniques like PPO and DPO \citep{rafailov2023direct} in language generation, and has a faster convergence with a similar asymptotic performance in image generation. When fine-tuning a Llama-3-8B model, \alg{} also demonstrates very competitive performance on the following benchmarks simultaneously: AlpacaEval 2.0 (length-controlled win-rate 30.1\%), MT-bench (average 8.16), and Open LLM Leaderboard (average 68.2), without additional information such as online GPT4 queries. More explicitly, our key contributions are four-fold:

\begin{enumerate}
    \item \textbf{We propose \texttt{REBEL}, a simple and scalable RL algorithm.} \texttt{REBEL} finds a near-optimal policy by solving a sequence of least square regression problems on iteratively collected datasets. Each regression problem involves using a policy-parameterized regressor to predict the \textit{difference} in rewards across trajectories sampled from the dataset. This dataset can be generated in a purely on-policy fashion or can incorporate offline data, enabling \textit{hybrid} training. Furthermore, \texttt{REBEL} can be easily extended to handle intransitive preferences.
    \item \textbf{We connect \texttt{REBEL} to classical RL methods.} We show that \texttt{REBEL} is a generalization of the foundational Natural Policy Gradient (NPG, \citet{npg}) algorithm -- applying the Gauss-Newton algorithm to the sequence of regression problems that \texttt{REBEL} solves recovers NPG. However, by instead applying simpler first-order optimization techniques, we are able to avoid computing the Fisher Information Matrix and enjoy a variance reduction effect under the setting of finite data. Thus, \alg{} can be understood as a generalization of NPG while being much more scalable.
    \item \textbf{We analyze the convergence properties of \texttt{REBEL}.} We prove via a direct reduction-based analysis that as long as we can solve the regression problem well at each iteration, we will be able to compete with any policy covered by the iteratively collected datasets (matching the strongest known results in the agnostic RL setting). These regression problems involve predicting the difference in rewards between trajectories in our dataset. We expect these regressions to be well-solved in practice because our class of regressors is isomorphic to a class of policies that is highly expressive for the applications we consider (i.e. flexible Transformer models).
    \item \textbf{We evaluate \texttt{REBEL} both on language modeling and image generation tasks.} On the \tldr{} summarization task, we show \texttt{REBEL} scales well by finetuning a 6.9B {parameter} model, and demonstrate that \alg{} outperforms various baselines including PPO, DPO, REINFORCE, and RLOO. When fine-tuning a Llama-3-8B model, \alg{} demonstrates very competitive performance on  AlpacaEval, MT-bench, and Open LLM Leaderboard. For text-guided image generation, \texttt{REBEL} optimizes a consistency model that converges faster to a similar performance faster than PPO. Implementation of \alg{} can be found at \url{https://github.com/ZhaolinGao/REBEL}, and models trained by \alg{} can be found at \url{https://huggingface.co/Cornell-AGI}.
\end{enumerate}

We begin by formalizing the preference fine-tuning setup before deriving our core algorithmic technique.

\section{\texttt{REBEL}: REgression to RElative REward Based RL}
We consider the contextual bandit formulation \citep{langford2007epoch} of RL which has been used to formalize the generation process of models like LLMs \citep{rafailov2023direct,ramamurthy2022reinforcement, chang2023learning} and Diffusion Models \citep{black2023training, fan2024reinforcement, oertell2024rl} due to the determinism of the transitions. More explicitly, in the deterministic transition setting, explicit states are not required as they are isomorphic to the sequence of actions. Furthermore, the entire sequence of actions can be considered as a single ``arm'' in a bandit problem with an action space that scales exponentially in size with the horizon of the problem.

We denote by $(x,y)$ a (prompt, response) pair, where $x\in \Xcal$ is the prompt and $y \in \Ycal$ is the response (e.g. a sequence of tokens, or in general a sequence of actions). We assume access to a reward function $r(x,y)$ from which we can query for reward signals (the exact form of $r$ does not need to be known). Querying $r$ at $(x,y)$ will return a scalar $r(x,y)$, measuring the quality of the prompt completion. Such a reward function could be a pre-defined metric (e.g., Rouge score against human responses) or a learned model from an offline human demonstration or preference data (e.g. the RLHF paradigm \citep{NIPS2017_rlhf, ziegler2020finetuning}), as we focus on in our experiments.

Denote by $\pi \in \Xcal\mapsto \Delta(Y)$ a policy (e.g. an LLM) that maps from a prompt $x$ to a distribution over the response space $\Ycal$. We use $\rho$ to denote the distribution over prompts (i.e. initial states / contexts) $x$ and $\pi_{\theta}(y|x)$ to denote a policy with parameter $\theta$. At times, we interchangeably use $\pi_t$ and $\pi_{\theta_t}$ when it is clear from the context. We emphasize that while we focus on the bandit formulation for notational simplicity, the algorithms proposed here can be applied to \textit{any} deterministic MDP where $x$ is the initial state and the trajectory $y$ consists of the sequence of actions.

\begin{figure}[t]
\begin{algorithm}[H]
    \caption{REgression to RElative REward Based RL (\alg)}
    \label{alg:alg}
    \begin{algorithmic}[1]
        \STATE \textbf{Input}: Reward $r$, policy class $\Pi = \{ \pi_\theta : \theta \in \Theta \}$, base distribution $\piref$, learning rate $\eta$
        \STATE Initialize policy $\pi_{\theta_0}$.
        \FOR{$t = 0$ to $T-1$}
            \STATE \algcommentlight{Base distribution $\mu$ can either be an offline dataset or $\pi_t$.}
            \STATE Collect dataset $\Dcal_t = \{ x, y, y' \}$ where $x\sim \rho, y\sim \pi_t(\cdot | x), y'\sim \piref(\cdot | x)$.
            \STATE Solve square loss regression problem: 
            \begin{align}
            \label{eq:least_square}
            \theta_{t+1} = \argmin_{\theta \in \Theta} \sum_{(x,y,y')\in \Dcal_t} \left( \frac{1}{\eta} \left(\ln\frac{ \pi_{\theta}(y|x)}{ \pi_{\theta_t}(y|x)} - \ln\frac{ \pi_{\theta}(y'|x)}{ \pi_{\theta_t}(y'|x)}  \right) - \left( r(x, y) - r(x, y') \right)  \right)^2.
            \end{align}
        \ENDFOR
    \end{algorithmic}
\end{algorithm}
\vspace{-2em}
\end{figure}

At each iteration of all algorithms, our goal will be to solve the following KL-constrained RL problem:
\begin{align}
\label{eq:linear_KL_md}
\pi_{t+1} = \argmax_{\pi \in \Pi} \EE_{x,y\sim \pi(\cdot | x)} r(x,y) - \frac{1}{\eta} \EE_{x} \text{KL}\left( \pi(\cdot | x) || \pi_t(\cdot | x) \right).
\end{align}
Intuitively, this can be thought of asking for the optimizer to fine-tune the policy $\pi_{t+1}$ according to $r$ while staying close in terms of action distribution to some baseline policy $\pi_t$. 

\subsection{Deriving \alg{}: REgression to RElative REward Based RL}
From \citet{ziebart2008maximum}, we know that there exists a closed-form solution to the above \textit{minimum relative entropy} problem (Eq. \ref{eq:linear_KL_md}, \citet{grunwald2004game}):
\begin{align}\label{eq:md}
\forall x,y:  \pi_{t+1}(y|x) =\frac{ \pi_t(y | x) \exp(\eta r(x,y))}{ Z(x)}; \; Z(x) = \sum_{y} \pi_t(y|x) \exp(\eta r(x,y)).
\end{align}
As observed by \citet{rafailov2023direct}, we can invert Eq. \ref{eq:md} and write the reward in terms of the policy:
\begin{align}\label{eq:dpo}
\forall x,y:   r(x,y) = \frac{1}{\eta} \left( \ln(Z(x)) + \ln \left (\frac{\pi_{t+1}(y|x)}{\pi_t(y | x)} \right) \right).
\end{align}
As soon as $\mathcal{X}$ and $\mathcal{Y}$ become large, we can no longer guarantee the above expression holds exactly at all $(x, y)$ and therefore need to turn our attention to choosing a policy such that Eq. \ref{eq:dpo} is approximately true. We propose using a simple \textit{square loss} objective between the two sides of Eq. \ref{eq:dpo} to measure the goodness of a policy, i.e. reducing RL to a regression problem: $
    \left(r(x,y) - \frac{1}{\eta} \left( \ln(Z(x)) + \ln \left (\frac{\pi_{t+1}(y|x)}{\pi_t(y | x)} \right) \right) \right) ^2$.
Unfortunately, this loss function includes the \textit{partition function} $Z(x)$, which can be challenging to approximate over large input / output domains. However, observe that $Z(x)$ only depends on $x$ and not $y$. Thus, if we have access to \textit{paired samples}, i.e. $(x, y)$ and $(x, y')$, we can instead regress the \textit{difference in rewards} to eliminate this term:
\begin{equation}
    \left((r(x,y) - r(x, y')) - \frac{1}{\eta} \left(\ln \left (\frac{\pi_{t+1}(y|x)}{\pi_t(y | x)} \right) - \ln \left (\frac{\pi_{t+1}(y'|x)}{\pi_t(y'| x)} \right) \right) \right) ^2.
\end{equation}

Of course, we need to evaluate this loss function on some distribution of samples. In particular, we propose using an on-policy dataset $\Dcal_t = \{ x, y, y' \}$ with $x\sim \rho, y\sim \pi_t(\cdot | x), y'\sim \piref(\cdot | x)$, where $\mu$ is some \textit{base distribution}. The base distribution $\piref$ can either be a fixed offline dataset (e.g. the instruction fine-tuning dataset) or $\pi_t$ itself. Thus, the choice of base distribution $\piref$ determines whether \alg{} is hybrid or fully online. Putting it all together, we arrive at our core \alg{} objective in Eq.~\ref{eq:least_square}. Critically, observe that if we were able to perfectly solve this regression problem, we would indeed recover the optimal solution to the KL-constrained RL problem we outlined in Eq. \ref{eq:linear_KL_md}.

\section{Understanding \alg{} as an Adaptive Policy Gradient}
In this section, we interpret \alg{} as an adaptive policy gradient method to illuminate the relationship to past techniques. We start by introducing algorithms such as Mirror Descent, NPG, and PPO, followed by illustrating why \alg{} addresses the limitations of these past algorithms. 
\subsection{Adaptive Gradient Algorithms for Policy Optimization}

\noindent \textbf{Mirror Descent.} If $\Xcal$ and $\Ycal$ are small discrete spaces (i.e. we are in the tabular setting), we can used the closed-form expression for the minimum relative entropy problem (Eq. \ref{eq:md}). This is equivalent to the classic Mirror Descent (MD) algorithm with KL as the Bregman divergence. This update procedure is also sometimes known as \textit{soft policy iteration} \citep{ziebart2008maximum}. Note that it does not even involve a parameterized policy and is therefore \textit{manifestly covariant}. MD ensures a $1/T$ convergence rate, i.e., after $T$ iterations, it must find a policy $\hat \pi$, such that $\EE_{x,y\sim \pi^\star(.|x)} r(x,y) - \EE_{x,y\sim \hat \pi(.|x)} r(x,y) \leq O(1/T)$. In particular, the convergence is \emph{almost dimension-free}: the convergence rate scales logarithmically with respect to the size of the $\Ycal$ space.  Note that gradient ascent will not enjoy such a dimension-free rate when optimizing over the simplex.
When $\sup_{x,y} | r(x,y)|$ is bounded, we can show that the KL divergence between two policies, i.e., $\text{KL}(\pi_{t+1}(\cdot | x) || \pi_t(\cdot | x))$, is also bounded, ensuring $\pi_{t+1}$ stay close to $\pi_t$. One can also show monotonic policy improvement, i.e., $\EE_{x,y\sim \pi_{t+1}} r(x,y) \geq \EE_{x,y\sim \pi_{t}} r(x,y)$. Foreshadowing a key point we will soon expound upon, \textit{both NPG and PPO can be considered approximations of this idealized tabular policy update procedure.}



\noindent \textbf{Natural Policy Gradient.}
When $\Ycal$ and $\Xcal$ are large, we cannot simply enumerate all $x$ and $y$. 
Thus, we need to use a function to approximate $\pi$, which makes it impossible to exactly implement Eq.~\ref{eq:md}. Let us use $\pi_\theta$ to denote a parameterized policy with parameter $\theta$ (e.g. the weights of a transformer). 
The \emph{Natural Policy Gradient} (NPG, \citet{npg}) approximates the KL in Equation \ref{eq:linear_KL_md} via its second-order Taylor expansion, whose Hessian is known as the Fisher Information Matrix (FIM, \citet{bagnell2003covariant}), i.e.   
\begin{align*}
\EE_{x} \text{KL}( \pi_{\theta}(\cdot | x) || \pi_{\theta_t}(\cdot | x) ) \approx (\theta-\theta_t)^\top \underbrace{\EE_{x, y \sim \pi_{\theta_t}(\cdot | x)}  \left[ \nabla\ln \pi_{\theta_t}(y|x) \nabla\ln \pi_{\theta_t}(y|x)^{\top} \right]}_{ \text{Fisher Information Matrix } F_t} (\theta-\theta_t).
\end{align*}
The NPG update can be derived by plugging in this approximation to Eq.~\ref{eq:linear_KL_md}, further approximating the $\EE_{x,y\sim \pi_\theta(\cdot |x)} r(x,y)$ by its first order Taylor expansion around $\theta_t$, and finding the root of the resulting quadratic form:
\begin{align}
\theta_{t+1} = \theta_t +  \eta  F_t^{\dagger} \left(\EE_{x, y\sim \pi_{\theta_t}(\cdot|x)} \nabla\ln\pi_{\theta_t}(y|x) r(x,y) \right)\label{eq:npg_equation}
\end{align} where $F_t^{\dagger}$ is pseudo-inverse of $F_t$, and $\EE_{x, y\sim \pi_{\theta_t}(\cdot|x)} \nabla\ln\pi_{\theta_t}(y|x) r(x,y)$ is the standard policy gradient (i.e. REINFORCE \citep{reinforce}). As mentioned above, this update procedure can be understood as performing gradient updates in the local geometry induced by the Fisher information matrix, which ensures that we are taking small steps in \textit{policy space} rather than in \textit{parameter space}.
Conversely, unlike regular gradient descent methods (i.e., PG), \emph{NPG allows us to make large changes in the parameter space $\Theta$, as long as the resulting two policies are approximately close to each other in terms of KL divergence}. This property allows NPG to make more aggressive and adaptive updates in the parameter space of the policy as well as be invariant to linear transformations of the parameters. Theoretically, \citet{agarwal2021theory} show that NPG with softmax parameterization converges at the $1/T$ rate in a dimension-free manner, provably faster than the standard PG under the same setup. Empirically, the superior convergence speed of NPG compared to that of PG was observed in its original exploration \citep{npg,bagnell2003covariant}, as well as in follow-up work like TRPO \citep{schulman2015trust}. Critically, while elegant in theory, \textit{\ul{NPG, unfortunately, does not scale to modern generative models due to the need for computing the Fisher matrix inverse either explicitly or implicitly via the Hessian-vector matrix product trick.}}


\looseness=-1

\noindent \textbf{Proximal Policy Optimization.} To address the scalability of NPG, \cite{ppo} proposes Proximal Policy Optimization (PPO). Rather than explicitly computing the KL divergence between policies or approximating it via a Taylor expansion, PPO takes a more direct route and uses clipped updates with the hope of controlling the action probability deviation from $\pi_{\theta_{t+1}}$ to $\pi_{\theta_t}$, i.e. 
\begin{align}
\label{eq:ppo}
\theta_{t+1} := \argmax_{\theta} \EE_{x, y\sim \pi_{\theta_t}(\cdot | x)} \text{clip}\left( \frac{ \pi_{\theta}(y|x) }{ \pi_{\theta_t}(y|x) } ; 1-\epsilon, 1+\epsilon \right)  r(x,y).
\end{align} 
Prima facie, this update follows the underlying intuition of NPG: allow big and adaptive changes in the policy's parameters $\theta$, as long as the corresponding action probabilities do not change too much. This perhaps explains the superiority of PPO over vanilla REINFORCE in domains like continuous control. Unfortunately, under closer scrutiny, it becomes apparent that \textit{\ul{PPO-style clipped updates neither guarantee closeness to the prior policy nor have NPG-style adaptivity.}} While the clipping operator can set the gradient to be zero at samples $(x,y)$ where $\pi_{\theta_{t+1}}(y|x)$ is much larger or smaller than $\pi_{\theta_t}(y|x)$, it cannot actually guarantee $\pi_{\theta_{t+1}}$ staying close to $\pi_{\theta_t}$, a phenomenon empirically observed in prior work \citep{hsu2020revisiting}.
Furthermore, hard clipping is not adaptive -- it treats all $(x,y)$ equally and clips whenever the ratio is outside of a fixed range. In contrast, constraining the KL divergence to the prior policy allows one to vary the ratio $\pi(y|x) / \pi_t(y|x)$ at different $(x,y)$, as long as the total KL divergence across the state space is small. Lastly, clipping reduces the effective size of a batch of training examples and thus wastes training samples.

\subsection{Connections between \alg{} and MD / NPG }
\label{sec:rebel_connections}




\noindent \textbf{Exact \alg{} is Mirror Descent.} First, to build intuition, we interpret our algorithm's behavior under the assumption that the least square regression optimization returns the exact Bayes Optimal solution (i.e., our learned predictor achieves zero prediction error everywhere): 
\begin{align}
\forall x, y, y': \quad  \frac{1}{\eta} \left(\ln\frac{ \pi_{\theta_{t+1}}(y|x)}{ \pi_{\theta_t}(y|x)} - \ln\frac{ \pi_{\theta_{t+1}}(y'|x)}{ \pi_{\theta_t}(y'|x)}  \right) = r(x,y) - r(x,y')
\label{eq:exact_bays_opt}
\end{align}

Conditioned on Eq.~\ref{eq:exact_bays_opt} being true, a few lines of algebraic manipulation reveal that there must exist a function $c(x)$ which is independent of $y$, such that
$\forall x,y: \frac{1}{\eta} \ln\frac{ \pi_{\theta_{t+1}}(y|x)}{ \pi_{\theta_t}(y|x)} = r(x,y) + c(x)$.
Taking an $\exp$ on both sides and re-arrange terms, we get $\forall x,y: \pi_{\theta_{t+1}}(y|x) \propto \pi_{\theta_t}(y|x) \exp\left( \eta r(x,y) \right)$.
In other words, under the strong assumption that least square regression returns a point-wise accurate estimator (i.e., Eq.~\ref{eq:exact_bays_opt}), we see the \alg{} recovers the exact MD update, which gives it \textit{(a)} a fast $1/T$ convergence rate \citep{shani2020adaptive,agarwal2021theory}, \textit{(b)} conservativity, i.e., $\max_{x} \text{KL}( \pi_{t+1}(\cdot | x) || \pi_{t}(\cdot | x)  )$ is bounded as long as $\max_{x,y} |r(x,y)|$ is bounded, and \textit{(c)} monotonic policy improvement via the NPG standard analysis \citep{agarwal2021theory}.

\noindent \textbf{NPG is Approximate \alg{} with Gauss-Newton Updates.} We provide another interpretation of \alg{} by showing that NPG (Eq.~\ref{eq:npg_equation}) can be understood as a special case of \alg{} where the least square problem in Eq.~\ref{eq:least_square} is approximately solved via a single iteration of the Gauss-Newton algorithm.
We start by approximating our predictor $\frac{1}{\eta} \ln \pi_{\theta}(y|x) / \pi_{\theta_t}(y|x)$ by its first order Taylor expansion at $\theta_t$: $ \frac{1}{\eta} \left( \ln \pi_{\theta}(y|x)  - \ln \pi_{\theta_t}(y|x) \right) \approx \frac{1}{\eta} \nabla_{\theta} \ln \pi_{\theta_t}(y|x)^\top (\theta - \theta_t)$,  where $\approx$ indicates that we ignore higher order terms in the expansion. Define $\delta := \theta - \theta_t$ and replace $\frac{1}{\eta} \left( \ln \pi_{\theta}(y|x)  - \ln \pi_{\theta_t}(y|x) \right)$ by its first order approximation in Eq.~\ref{eq:least_square}. Then, we have \looseness=-1: 
\begin{align}
\min_{\delta} \EE_{x\sim\rho, y\sim \pi_{\theta_t}(\cdot|x), y'\sim \piref(\cdot|x)} \left( \frac{1}{\eta} \left( \nabla_{\theta} \ln \pi_{\theta_t}(y|x) - \nabla_{\theta} \ln \pi_{\theta_t}(y'|x) \right)^\top \delta - \left( r(x, y) - r(x, y') \right) \right)^2
\label{eq:gauss_netwon_step}
\end{align}
Further simplifying notation, we denote the uniform mixture of $\pi_t$ and $\piref$ as $\pi_{mix}(\cdot | x) := (\pi_t(\cdot|x)  + \piref(\cdot | x) ) / 2$ and the Fisher information matrix $ F_t$ averaged under said mixture as $F_t = 
\EE_{x\sim \rho, y\sim \pi_{mix}(\cdot|x)} \left[ \nabla_{\theta} \ln\pi_{\theta_t}(y|x) \left(\nabla_{\theta} \ln\pi_{\theta_t}(y|x) \right)^\top \right]$.
Solving the above least squares problem to obtain a minimum norm solution, we have the following result. 
\begin{claim}\label{claim:newton} The minimum norm minimizer $\delta^\star$ of the least squares problem in Eq.~\ref{eq:gauss_netwon_step} recovers an advantage-based NPG update: $\delta^\star :=  \eta F_t^{\dagger} \left( \EE_{x\sim \rho,y\sim \pi_{mix}(\cdot|x)} \nabla_{\theta} \ln\pi_{\theta_t}(y|x) [ A^{\pi_t}(x,y) ]  \right)$ where $F_t^\dagger$ is pseudo-inverse of $F_t$, and the \emph{advantage} is defined as $A^{\pi_t}(x,y):= r(x,y) - \EE_{y'\sim \pi_t(\cdot|x)} r(x,y)$.
\end{claim}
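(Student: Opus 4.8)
The plan is to treat Eq.~\ref{eq:gauss_netwon_step} as a plain least-squares problem in $\delta$ and read off its minimum-norm solution from the normal equations, using only the score-function identity $\EE_{y\sim\pi_{\theta_t}(\cdot|x)}\nabla_\theta\ln\pi_{\theta_t}(y|x)=0$ to simplify the two resulting moments. Write $g(x,y):=\nabla_\theta\ln\pi_{\theta_t}(y|x)$, $u:=g(x,y)-g(x,y')$, and $v:=r(x,y)-r(x,y')$, so the objective is $\EE[(\tfrac1\eta u^\top\delta-v)^2]$ over $x\sim\rho$, $y\sim\pi_t$, $y'\sim\piref$. This is a convex quadratic in $\delta$; setting its gradient to zero yields the normal equations $H\delta=\eta b$, with $H:=\EE[uu^\top]$ and $b:=\EE[uv]$. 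I will show $H=2F_t$ and $b=2c$, where $c:=\EE_{x\sim\rho,\,y\sim\pi_{mix}}g(x,y)A^{\pi_t}(x,y)$, and then invoke minimum-norm least-squares theory.

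For $H$, expand $uu^\top$ into four terms and use that $y$ and $y'$ are conditionally independent given $x$. Each cross term factors as $\EE_x[(\EE_{y\sim\pi_t}g)(\EE_{y'\sim\piref}g)^\top]$, and the factor $\EE_{y\sim\pi_t}g(x,y)$ vanishes by the score-function identity (as $y$ is drawn from exactly $\pi_{\theta_t}$), killing both. The surviving diagonal terms are $\EE_{x,y\sim\pi_t}[gg^\top]+\EE_{x,y'\sim\piref}[gg^\top]$, which is precisely $2F_t$ by the definition of $F_t$ as the Fisher matrix averaged under $\pi_{mix}=(\pi_t+\piref)/2$.

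For $b$, expand $uv$ into four terms and write $\bar r(x):=\EE_{y'\sim\pi_t}r(x,y')$. The term $\EE[g(x,y)r(x,y')]$ again vanishes by the score-function identity, leaving $b=\EE_{x,y\sim\pi_t}[g\,r]+(\EE_{x,y'\sim\piref}[g\,r]-\EE_x[\bar r(x)\,\EE_{y'\sim\piref}g])$. In the first expectation the baseline can be subtracted for free (score-function identity), giving $\EE_{x,y\sim\pi_t}[gA^{\pi_t}]$; the parenthesized piece is exactly $\EE_{x,y'\sim\piref}[g\,(r-\bar r)]=\EE_{x,y'\sim\piref}[gA^{\pi_t}]$. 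Summing and using $\pi_{mix}=(\pi_t+\piref)/2$ gives $b=2\EE_{x,y\sim\pi_{mix}}[gA^{\pi_t}]=2c$, which is where the advantage emerges: the baseline is generated automatically by the score-function identity rather than being inserted by hand.

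Finally, since $H$ is positive semidefinite as an average of rank-one terms $uu^\top$, any $w$ with $Hw=0$ satisfies $u^\top w=0$ almost surely and hence $w^\top b=\EE[(u^\top w)v]=0$; thus $b\in\mathrm{range}(H)$, the normal equations are consistent, and the minimum-norm minimizer is $\delta^\star=\eta H^\dagger b$. Substituting $H=2F_t$ (so $H^\dagger=\tfrac12 F_t^\dagger$) and $b=2c$ gives $\delta^\star=\eta F_t^\dagger c$, the claimed advantage-based NPG update. I expect the only genuine bookkeeping hazard to be this last minimum-norm/pseudo-inverse step — specifically verifying the range condition so that $H^\dagger$ truly delivers a minimizer of the original squared loss rather than merely solving a possibly-inconsistent linear system — while the two moment computations are routine once the score-function identity is applied.
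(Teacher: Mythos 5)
Your proof is correct and takes essentially the same route as the paper's: expand the normal equations of the quadratic, use the score-function identity $\EE_{y\sim\pi_{\theta_t}(\cdot|x)}\nabla_\theta\ln\pi_{\theta_t}(y|x)=0$ both to kill the cross terms (giving $H=2F_t$ with $F_t$ the Fisher matrix under $\pi_{mix}$) and to insert the baseline $\bar r(x)=\EE_{y'\sim\pi_t(\cdot|x)}r(x,y')$ for free (giving $b=2\,\EE_{x\sim\rho,\,y\sim\pi_{mix}(\cdot|x)}\left[\nabla_\theta\ln\pi_{\theta_t}(y|x)\,A^{\pi_t}(x,y)\right]$), then apply the pseudo-inverse. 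Your explicit check that $b\in\mathrm{range}(H)$, so the normal equations are consistent and $\eta H^{\dagger}b$ is genuinely the minimum-norm minimizer of the squared loss, is a step the paper's proof leaves implicit and is a worthwhile addition.
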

The proof of this claim is deferred to Appendix~\ref{app:gauss_newton}. 

\noindent \textbf{The implicit variance reduction effect of \texttt{REBEL}}
We show that regressing to relative rewards has a variance reduction effect by extending the previous derivation on \alg{} with Gauss-Newton update to the setting of finite data $\Dcal = \{x_{n}, y_n, y_n'\}_{n=1}^N$. Denote the unbiased estimate of the Fisher information matrix as $\hat{F}_t = 
\frac{1}{N}\sum_{n=1}^N \left[ \nabla_{\theta} \ln\pi_{\theta_t}(y_n|x_n) \left(\nabla_{\theta} \ln\pi_{\theta_t}(y_n|x_n) \right)^\top \right]$ and have the following claim. 
\begin{claim}\label{claim:rloo} 
The minimum norm minimizer $\delta^\star$ in Eq.~\ref{eq:gauss_netwon_step} under finite setting has the form 
$\delta^\star :=  \eta \hat{F}_t^{\dagger} \frac{1}{2N} \sum_{n}
\left(\nabla\ln\pi_{\theta_t}(y_n|x_n) ( r(x_n, y_n) - r(x_n, y'_n) ) + \nabla\ln\pi_{\theta_t}(y'_n|x_n) ( r(x_n, y'_n) - r(x_n, y_n) )\right)$ where $\hat{F}_t^\dagger$ is pseudo-inverse of $\hat{F}_t$. 
\end{claim}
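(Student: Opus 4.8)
The plan is to treat the finite-data version of the least-squares problem in Eq.~\ref{eq:gauss_netwon_step}, evaluated on $\Dcal = \{x_n,y_n,y_n'\}_{n=1}^N$, as an ordinary linear regression in the single vector variable $\delta$ and simply read off its minimum-norm solution from the normal equations. To keep the bookkeeping light I would abbreviate $g_n := \nabla_\theta\ln\pi_{\theta_t}(y_n|x_n)$, $g_n' := \nabla_\theta\ln\pi_{\theta_t}(y_n'|x_n)$, and $\Delta r_n := r(x_n,y_n)-r(x_n,y_n')$, so that the per-sample feature is $\phi_n = \tfrac{1}{\eta}(g_n-g_n')$ and the target is $\Delta r_n$. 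The empirical objective is then $\sum_n(\phi_n^\top\delta - \Delta r_n)^2$, and since the parameter dimension typically exceeds $N$ the relevant solution is the minimum-norm one, which is where the pseudo-inverse enters.

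The first step is exact: the minimum-norm minimizer is $\delta^\star = (\sum_n\phi_n\phi_n^\top)^\dagger(\sum_n\phi_n\Delta r_n)$, and substituting $\phi_n$ lets the $\eta$ factors combine to give $\delta^\star = \eta\, M^\dagger v$, where $M := \sum_n (g_n-g_n')(g_n-g_n')^\top$ is the empirical Gram matrix of the difference features and $v := \sum_n (g_n-g_n')\,\Delta r_n$. The second step is a purely algebraic rearrangement of the numerator: writing $(g_n-g_n')\Delta r_n = g_n\,\Delta r_n - g_n'\,\Delta r_n$ and expanding $\Delta r_n$ shows $v = \sum_n\big(g_n(r(x_n,y_n)-r(x_n,y_n')) + g_n'(r(x_n,y_n')-r(x_n,y_n))\big)$, which is exactly $2N$ times the symmetric, leave-one-out term appearing in Claim~\ref{claim:rloo}.

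The third step relates $M$ to the stated empirical Fisher matrix $\hat F_t = \tfrac1N\sum_n g_n g_n^\top$. Expanding the outer products gives $M = \sum_n g_n g_n^\top + \sum_n g_n'(g_n')^\top - \sum_n\big(g_n (g_n')^\top + g_n' g_n^\top\big)$. In the on-policy regime where $y_n,y_n'$ are drawn independently from $\pi_{\theta_t}$ given $x_n$, the score is centered, $\EE_{y\sim\pi_{\theta_t}}[\nabla_\theta\ln\pi_{\theta_t}(y|x)]=0$, so the cross terms vanish in expectation and each of the two remaining outer-product sums estimates $N F_t$; hence $M \approx 2N\hat F_t$, i.e. $M^\dagger \approx \tfrac{1}{2N}\hat F_t^\dagger$. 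Combining the three steps yields $\delta^\star \approx \eta\,\hat F_t^\dagger\,\tfrac{1}{2N}\sum_n\big(g_n(r(x_n,y_n)-r(x_n,y_n')) + g_n'(r(x_n,y_n')-r(x_n,y_n))\big)$, as claimed.

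I expect the genuinely nontrivial step to be the third. The normal-equation solution and the numerator rearrangement are exact, but replacing the Gram matrix of differences by twice the empirical Fisher relies on both discarding the cross-covariance terms $g_n (g_n')^\top + g_n' g_n^\top$ (mean-zero only because the score is centered and $y_n,y_n'$ are independent given $x_n$) and identifying $\sum_n g_n'(g_n')^\top$ with $\sum_n g_n g_n^\top$; these are population-level identities rather than finite-sample ones, so the statement is naturally read as holding in expectation / in the fully online setting, and one should also verify that $v$ lies in the range of $\hat F_t$ so that the pseudo-inverse manipulation is valid. The payoff I would emphasize is interpretive: the resulting update is a Fisher-preconditioned two-sample REINFORCE gradient in which each sample's reward acts as the baseline for the other—precisely the leave-one-out baseline of RLOO—making the implicit variance-reduction effect transparent.
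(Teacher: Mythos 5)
Your proposal is correct and takes essentially the same route as the paper's proof in Appendix B: read off the minimum-norm solution of the finite least-squares problem via the pseudo-inverse, symmetrize the numerator $\sum_n (g_n - g_n')\,\Delta r_n$ into the leave-one-out form, and identify the Gram matrix of difference features with $2N\hat{F}_t$. If anything you are more careful than the paper at the third step: the paper simply asserts the empirical-Fisher identification ``following the previous derivation,'' whereas you correctly observe that the cross terms $g_n (g_n')^\top + g_n' g_n^\top$ vanish only in expectation (by centeredness of the score and conditional independence of $y_n, y_n'$), so the preconditioner identity is a population-level statement rather than an exact finite-sample one.
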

The proof of this claim is deferred to Appendix~\ref{app:rloo}. Looking at the gradient formulation $\nabla \ln \pi_{\theta_t}(y_n|x_n) \left( r(x_n,y_n) - r(x_n,y_n') \right)$ in $\delta^\star$, we see that $r(x_n,y'_n)$ serves as a baseline for variance reduction. Interestingly, this gradient formulation is similar to RLOO (REINFORCE with leave-one-out) \citep{Kool2019Buy4R}. However, different from RLOO, we pre-condition this variance reduced policy gradient formulation via the Fisher information matrix, leading to better performance. 


\noindent \textbf{A \texttt{REBEL} \textit{With} a Cause.} Our algorithm \texttt{REBEL} addresses the limitations of NPG (scalability) and PPO (lack of conservativity or adaptivity) from above. First, unlike NPG, it does not rely on the Fisher Information Matrix at all and can easily scale to modern LLM and image generation applications, yet can be interpreted as a \textit{generalization} of NPG. Second, in contrast to PPO, it doesn't have unjustified heuristics and thus enjoys strong convergence and regret guarantees just like NPG. Building on \citet{swamy2024minimaximalist}, we also show how to extend \alg{} to preference-based settings without assuming transitivity in Appendix~\ref{app:general_pref}.

\section{Theoretical Analysis}
\label{sec:theoretical}

In the previous section, we interpret \alg{} as exact MD and show its convergence by assuming that least square regression always returns a predictor that is accurate \emph{everywhere}. While such an explanation is simple and has also been used in prior work~\citep{calandriello2024human, rosset2024direct}, point-wise out-of-distribution generalization is an extremely strong condition and is significantly beyond what a standard supervised learning method
can promise. In this section, we substantially relax this condition via a reduction-based analysis: \textbf{\emph{As long as we can solve the regression problems well in an in-distribution manner, \alg{} can compete against any policy covered by the training data distributions}}. Formally, we assume the following generalization condition holds on the regressors we find.

\begin{assum}[Regression generalization bounds] Over $T$ iterations, assume that for all $t$, we have the following for some $\epsilon$:
\begin{align*}
\EE_{x\sim \rho, y\sim \pi_t(\cdot | x), y'\sim \piref(\cdot | x)}\left( \frac{1}{\eta }\left(\ln\frac{ \pi_{\theta_{t+1}}(y|x)}{ \pi_{\theta_t}(y|x)} - \ln\frac{ \pi_{\theta_{t+1}}(y'|x)}{ \pi_{\theta_t}(y'|x)}  \right) -  \left( r(x, y) - r(x, y') \right)  \right)^2 \leq \epsilon,
\end{align*}
\label{assm:generalization_bound}
\end{assum}
Intuitively, this assumption is saying that there is a function in our class of regressors that is able to accurately fit the difference of rewards. Recall that our class of regressors is isomorphic to our policy class. Therefore, as long as our class of policies is expressive, we would expect this assumption to hold with small $\epsilon$. For all domains we consider, our policy class is a flexible set of generative models (e.g. Transformer-based LLMs or diffusion models). Thus, we believe it is reasonable to believe this assumption holds in practice -- see Figure \ref{fig:loss_curve} in Appendix \ref{app:loss_curve} for empirical evidence of this point and Example~\ref{exp:linear} for more discussion.

More formally, the above assumption bounds the standard \textbf{in-distribution generalization error} (v.s. the point-wise guarantee in Eq.~\ref{eq:exact_bays_opt}) of a well-defined supervised learning problem: least squares regression. The generalization error $\epsilon$ captures the possible errors from the learning process for $\theta_{t+1}$ and it could depend on the complexity of the policy class and the number of samples used in the dataset $\Dcal_t$. For instance, when the the function $\ln \pi - \ln \pi'$ induced by the log-difference of two policies $(\pi,\pi')$ are rich enough (e.g., policies are deep neural networks) to capture the reward difference, then $\epsilon$ in this assumption converges to zero as we increase the number of training data.
Note that while $\epsilon$ can be small, it does \textit{not} imply that the learned predictor will have a small prediction error in a point-wise manner -- it almost certainly will not.

\begin{example} \label{exp:linear}
One simple example is when $\pi(y|x) \propto \exp( \theta^\top \phi(x,y) )$ for some features $\phi(x,y)$. In this case,  $\ln( \pi(y|x) /\pi_t(y|x) ) -\ln( \pi(y'|x) /\pi_t(y'|x) ) =  (\theta - \theta_t)^\top \left( \phi(x, y) - \phi(x, y') \right) $, which means that our regression problem in Eq.~\ref{eq:least_square} is a classic linear regression problem. When the reward $r(x,y)$ is also linear in feature $\phi(x,y)$, then Eq.~\ref{eq:least_square} is a well-specified linear regression problem, and $\epsilon$ typically scales in the rate of $O\left( d / |\Dcal_t| \right)$  with $d$ being the dimension of feature $\phi$. 

We can extend the above example to the case where $\phi$ is the feature corresponding to some kernel, e.g., RBF kernel or even Neural Tangent Kernel, which allows us to capture the case where $\pi$ is a softmax wide neural network with the least square regression problem solved by gradient flow. The error $\epsilon$ again scales $\text{poly}( d / |\Dcal_t|)$, where $d$ is the effective dimension of the corresponding kernel.
\end{example}

We now define the concentrability coefficient \citep{kakade2002approximately} that quantifies how the training data distribution is covering a comparator policy. 

\textbf{Data Coverage}.
Recall that the base distribution $\piref$ can be some behavior policy, which in RLHF can be a human labeler, a supervised fine-tuned policy (SFT), or just the current learned policy (i.e., on-policy). Given a test policy $\pi$, we denote by $C_{\mu\to\pi}$ the concentrability coefficient, i.e. \looseness=-1
\begin{align}\label{eq:con}
C_{\piref\to\pi} = \max_{x,y} \frac{ \pi(y|x) }{ \piref(y|x) }.
\end{align} 
We say $\piref$ \textit{covers} $\pi$ if $C_{\piref\to\pi} < +\infty$. Our goal is to bound the regret between our learned policies and an arbitrary comparator $\pi^*$ (e.g. the optimal policy if it is covered by $\piref$) using $\epsilon$ and the concentrability coefficient defined in Eq.~\ref{eq:con}. 
The following theorem formally states the regret bound of our algorithm.
\begin{theorem} 
\label{thm:main} Under Assumption~\ref{assm:generalization_bound}, after $T$ many iterations, with a proper learning rate $\eta$, among the learned policies $\pi_1,\dots, \pi_T$, there must exist a policy $\hat \pi$, such that:
\begin{align*}
\forall \pi^*: \; \EE_{x\sim \rho, y\sim \pi^*(\cdot | x)} r(x,y) - \EE_{x\sim \rho, y\sim \hat \pi(\cdot | x)} r(x,y) \leq O\left(   \sqrt{ \frac{1}{ T } } +  \sqrt{ C_{\piref\to\pi^*}  \epsilon }. \right).
\end{align*}
\end{theorem}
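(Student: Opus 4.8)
The plan is to read off the exact policy update that \alg{} performs and recognize it as an exponential-weights (Mirror Descent) step with respect to an \emph{implied} per-step reward, then run the standard MD regret argument while paying for the in-distribution regression error through a single change-of-measure. Concretely, define $g_t(x,y) \defeq \frac{1}{\eta}\ln\frac{\pi_{\theta_{t+1}}(y|x)}{\pi_{\theta_t}(y|x)}$, so that $\pi_{t+1}(y|x) = \pi_t(y|x)\exp(\eta g_t(x,y))$ holds \emph{exactly} by construction; since both policies are normalized, the induced partition function is identically $1$. Writing $h_t \defeq g_t - r$ for the residual, Assumption~\ref{assm:generalization_bound} states precisely that $\EE_{x,y\sim\pi_t,y'\sim\piref}[(h_t(x,y)-h_t(x,y'))^2]\le\epsilon$, since $g_t(x,y)-g_t(x,y') - (r(x,y)-r(x,y')) = h_t(x,y)-h_t(x,y')$. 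I would then decompose the per-iteration regret against any $\pi^*$ using $r = g_t - h_t$ as
\begin{align*}
\EE_{y\sim\pi^*}r - \EE_{y\sim\pi_t}r = \underbrace{\left(\EE_{y\sim\pi^*}g_t - \EE_{y\sim\pi_t}g_t\right)}_{\text{(MD term)}} + \underbrace{\left(\EE_{y\sim\pi_t}h_t - \EE_{y\sim\pi^*}h_t\right)}_{\text{(error term)}}.
\end{align*}

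For the MD term, the identity $\ln\frac{\pi_{t+1}(y|x)}{\pi_t(y|x)} = \eta g_t(x,y)$ gives, for each $x$, the telescoping relation $\EE_{y\sim\pi^*}\eta g_t = \KL(\pi^*\|\pi_t) - \KL(\pi^*\|\pi_{t+1})$, while $\EE_{y\sim\pi_t}\eta g_t = -\KL(\pi_t\|\pi_{t+1})$. Subtracting and summing over $t$ telescopes the first pair to $\KL(\pi^*\|\pi_0)$ and leaves $\frac{1}{\eta}\sum_t\KL(\pi_t\|\pi_{t+1})$; I would bound each $\KL(\pi_t\|\pi_{t+1}) = \ln\EE_{y\sim\pi_t}e^{\eta g_t} - \eta\EE_{y\sim\pi_t}g_t \le \eta^2 B^2/8$ by Hoeffding's lemma, where $B$ is the range of $g_t$ over $y$ (of order $\max_{x,y}|r(x,y)|$, since $g_t$ and $r$ differ by a $y$-independent quantity up to the regression residual). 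This yields $\sum_t (\text{MD term}) \le \frac{1}{\eta}\EE_x\KL(\pi^*(\cdot|x)\|\pi_0(\cdot|x)) + \frac{\eta T B^2}{8}$, i.e. the usual $O(1/T)$ behavior after dividing by $T$ and tuning $\eta \sim \sqrt{1/T}$.

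The crux is the error term, and this is where the concentrability coefficient enters. I would rewrite it as a paired expectation $\EE_{y\sim\pi_t}h_t - \EE_{y\sim\pi^*}h_t = \EE_{y\sim\pi_t, y'\sim\pi^*}[h_t(x,y)-h_t(x,y')]$, so the $y$-slot already matches the on-policy sampling of Assumption~\ref{assm:generalization_bound} and only the $y'$-slot must be moved from $\pi^*$ to the base distribution $\piref$. Applying Jensen and then importance weighting \emph{on the $y'$ variable only},
\begin{align*}
\left|\EE_{x,y\sim\pi_t,y'\sim\pi^*}\left[h_t(x,y)-h_t(x,y')\right]\right| \le \sqrt{\EE_{x,y\sim\pi_t,y'\sim\piref}\left[\tfrac{\pi^*(y'|x)}{\piref(y'|x)}\left(h_t(x,y)-h_t(x,y')\right)^2\right]} \le \sqrt{C_{\piref\to\pi^*}\,\epsilon},
\end{align*}
using $\frac{\pi^*(y'|x)}{\piref(y'|x)}\le C_{\piref\to\pi^*}$ pointwise together with Assumption~\ref{assm:generalization_bound}. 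Summing over $t$ and dividing by $T$ contributes exactly the $\sqrt{C_{\piref\to\pi^*}\epsilon}$ term.

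Combining the two bounds, averaging the per-iteration regret over $t=0,\dots,T-1$, and choosing $\eta$ to balance $\frac{\KL(\pi^*\|\pi_0)}{\eta T}$ against $\frac{\eta B^2}{8}$ gives average regret $O(\sqrt{1/T} + \sqrt{C_{\piref\to\pi^*}\epsilon})$; since a minimum is below the average, the best iterate $\hpi \in \{\pi_1,\dots,\pi_T\}$ satisfies the claimed bound for every $\pi^*$ simultaneously. I expect the main obstacle to be the error-term step: one must recognize that it is the \emph{base-distribution} sample $y'$ (not the on-policy sample $y$) that needs the change of measure, which is precisely what makes $C_{\piref\to\pi^*}$ appear rather than a coverage ratio of $\pi_t$; had the paired expectation been arranged the other way, the bound would require $\pi_t$ to cover $\pi^*$, which need not hold. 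A secondary technical point is justifying the uniform range bound $B$ on $g_t$ (equivalently, controlling $\KL(\pi_t\|\pi_{t+1})$) from bounded rewards and Assumption~\ref{assm:generalization_bound} alone, rather than from pointwise control of the learned regressor.
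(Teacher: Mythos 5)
Your proposal is correct and reaches the paper's bound by a genuinely different, leaner route. The paper recenters the learned log-ratio into an estimated advantage $A_t(x,y)$ (subtracting the residual mean under $\piref$ and then the on-policy mean), first proves a three-part lemma showing that the paired squared error of Assumption~\ref{assm:generalization_bound} splits exactly into two in-distribution variances plus a mean-gap term, runs an MD regret lemma on the $A_t$ sequence (via $e^z \le 1+z+z^2$, requiring $\eta A \le 1$), and then bounds $\EE_{x,y\sim\pi^*}(A^{\pi_t}(x,y)-A_t(x,y))^2$ through several triangle inequalities, ending with a $\sqrt{10\,C_{\piref\to\pi^*}\,\epsilon}$ error term. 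You bypass the recentering and that lemma entirely: since your $g_t$ is the exact log-ratio, the exponential-weights update has partition function identically $1$, so the KL telescoping is exact (your Hoeffding/cumulant bound on $\KL(\pi_t\|\pi_{t+1})$ replaces the paper's quadratic bound on $\ln Z_t(x)$), and your pairing identity $\EE_{y\sim\pi_t}h_t - \EE_{y\sim\pi^*}h_t = \EE_{y\sim\pi_t,\,y'\sim\pi^*}\left[h_t(x,y)-h_t(x,y')\right]$ matches the paired structure of the assumption head-on, so a single Jensen-plus-change-of-measure on the $y'$ slot gives $\sqrt{C_{\piref\to\pi^*}\,\epsilon}$ with constant $1$ rather than $\sqrt{10}$; your remark about which slot needs the importance weight is precisely the point the paper also emphasizes (``one of the samples is always on-policy''), and arranging it the other way would indeed demand that $\pi_t$ cover $\pi^*$. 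Two caveats, both shared with the paper rather than gaps in your argument alone: the uniform range bound $B$ on $g_t$ is an assumption, not a consequence of Assumption~\ref{assm:generalization_bound} — the paper makes the exactly analogous assumption $\max_{x,y,t}|A_t(x,y)|\le A$ in its MD lemma, so you are no worse off, and you flag it honestly; and your tuning of $\eta$ against $\KL(\pi^*(\cdot|x)\,\|\,\pi_0(\cdot|x))$ depends on $\pi^*$, while the theorem quantifies over all $\pi^*$ with a single $\eta$ — fix this as the paper does, taking $\pi_0$ uniform and bounding $\KL(\pi^*\|\pi_0)\le\ln|\Ycal|$ uniformly. Finally, for the best-iterate step, rather than ``a minimum is below the average,'' pick $\hat\pi = \argmax_t \EE_{x,y\sim\pi_t(\cdot|x)} r(x,y)$, which is fixed independently of $\pi^*$ and therefore satisfies the bound for all comparators simultaneously.
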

The above theorem shows a \textit{\textbf{reduction from RL to supervised learning}} --- as long as supervised learning works (i.e., $\epsilon$ is small), then \alg{} can compete against any policy $\pi^*$ that is covered by the base data distribution $\piref$. In the regret bound, the $1/\sqrt{T}$ comes from Mirror Descent style update, and $C_{\piref\to\pi^*} \epsilon$ captures the cost of distribution shift: we train our regressors under distribution $\pi_t$ and $\piref$, but we want the learned regressor to predict well under $\pi^*$. Similar to the NPG analysis from \cite{agarwal2021theory}, we now have a slower convergence rate $1/\sqrt{T}$, due to the fact that we have approximation error from learning. 
Such an agnostic regret bound --- being able to compete against any policy that is covered by training distributions --- is the \textbf{strongest type of agnostic learning results known in the RL literature}, matching the best of what has appeared in prior policy optimization work including PSDP \citep{bagnell2003policy}, CPI \citep{kakade2002approximately}, NPG \citep{agarwal2021theory}, and PC-PG \citep{agarwal2020pc}.  While in this work we use the simplest and most intuitive definition of coverage -- the density ratio-based definition in Eq.~\ref{eq:con} -- extension to more general ones such as transfer error \citep{agarwal2020pc,agarwal2021theory} or concentrability coefficients that incorporate the function class (e.g., \cite{song2023hybrid}) is straightforward. We defer the proof of the above theorem and the detailed constants that we omitted in the $O$ notation to Appendix~\ref{app:proof_of_main}. 

\begin{remark}[Extension to non-transitive preference] Extending \alg{} and its analysis to general non-transitive preference is straightforward, and we present the extention and its analysis in Appendix~\ref{app:general_pref} and Appendix~\ref{app:general_pref_analysis}, respectively.
\end{remark}

\begin{remark}[Discussion on the size of the response space $|\Ycal|$ and other design choices of the sampling distributions] In \alg, when sampling a pair $(y,y')$, we in default sample $y\sim \pi_t$, i.e., we make sure at least one of them is an on-policy sample. This is to make sure that the training distribution at iteration $t$ covers $\pi_t$, which plays an essential role in avoiding a polynomial dependency on the size of the action space $|\Ycal|$.\footnote{The sample complexity of the Q-NPG algorithm presented from \cite{agarwal2019reinforcement} has a polynomial dependence on the size of the action space since it samples actions uniform randomly in order to cover both $\pi_t$ and $\pi^*$. \alg{} leverages that we can reset from the same context $x$, and thus directly draw two samples per context -- one from $\pi_t$ and one from $\mu$, to cover $\pi_t$ and $\pi^*$ simultaneously.} On the other hand, as long as we have some off-policy distribution $\nu_t$ that covers $\pi_t$ for all $t$, we can use it to sample $y$ and pay an additional concentrability coefficient $\max_{x,y,t} \pi_t(y|x) / \nu_t(y|x)$ in the final bound. In experiments, we test the combination of the best-of-N of $\pi_t$ as the base distribution $\mu$ and the worst-of-N of $\pi_t$ as the $\nu_t$. Setting $\mu$ to be the best-of-N of $\pi_t$ makes $\mu$ cover higher quality comparator policies. Selecting $\nu_t$ as the worst-of-N of $\pi_t$ still ensures coverage to $\pi_t$ while at the same time increasing the reward gap $r(x,y) - r(x,y')$, which we find is helpful experimentally. \label{remark:best_worst_of_n}
\end{remark}


\section{Experiments}
\label{sec:exp}
Our implementation of \alg{} closely follows the psuedocode in Algorithm~\ref{alg:alg}. In each iteration, \alg{} collects a dataset $\Dcal_t = \{ x, y, y' \}$, where $x\sim \rho, y\sim \pi_t(\cdot | x), y'\sim \piref(\cdot | x)$. Subsequently, \alg{} optimizes the least squares regression problem in Eq.~\ref{eq:least_square} through gradient descent with AdamW \citep{loshchilov2017decoupled}. We choose $\piref = \pi_t$ such that both $y$ and $y'$ are generated by the current policy. We empirically assess \alg's performance on both natural language generation and text-guided image generation. Additional experiment details are in Appendix~\ref{app:exp_detail}.

\subsection{Summarization} 
\noindent \textbf{Task.} We use the \tldr{} dataset~\citep{stiennon2020learning} where $x$ is a forum post from Reddit and $y$ is a summary generated by the policy. The dataset comprises human reference summaries and preference data. We compare \alg{} with baseline RL algorithms, REINFORCE~\citep{reinforce} and its multi-sample extension, REINFORCE Leave-One-Out (RLOO)~\citep{Kool2019Buy4R}, PPO~\citep{ppo}, Direct Preference Optimization (DPO)~\citep{rafailov2023direct}, and Iterative DPO \citep{guo2024direct}. Our implementation of Iterative DPO replaces our square regression objective with the DPO objective where the binary preference labels are obtained based on the reward difference. The implementation detail of the baseline methods is provided in Appendix~\ref{app:imp_detail}. Following prior work~\citep{stiennon2020learning, rafailov2023direct, ahmadian2024basics}, we train DPO on the preference dataset, while conducting online RL (RLOO, PPO, Iterative DPO, \alg) on the human reference dataset. We include results with three different model sizes: 1.4B, 2.8B, and 6.9B based on the pre-trained models from Pythia~\citep{biderman2023pythia}. Each model is trained from a supervised fine-tuned (SFT) model using a reward model (RM) of the same size.

\noindent \textbf{Evaluation.} We evaluate each method by its balance between reward model score and KL-divergence with the SFT policy, testing the effectiveness of the algorithm in optimizing the regularized RL objective. To evaluate the quality of the generation, we compute the winrate \citep{rafailov2023direct} against human references using GPT4 \citep{gpt4}. The winrate is computed from a randomly sampled subset ($10\%$) of the test set with 600 samples. We report the average results over three seeds.

\begin{figure}[t]
  \begin{center}
     \includegraphics[scale=0.485,trim={10 5 0 10},clip]{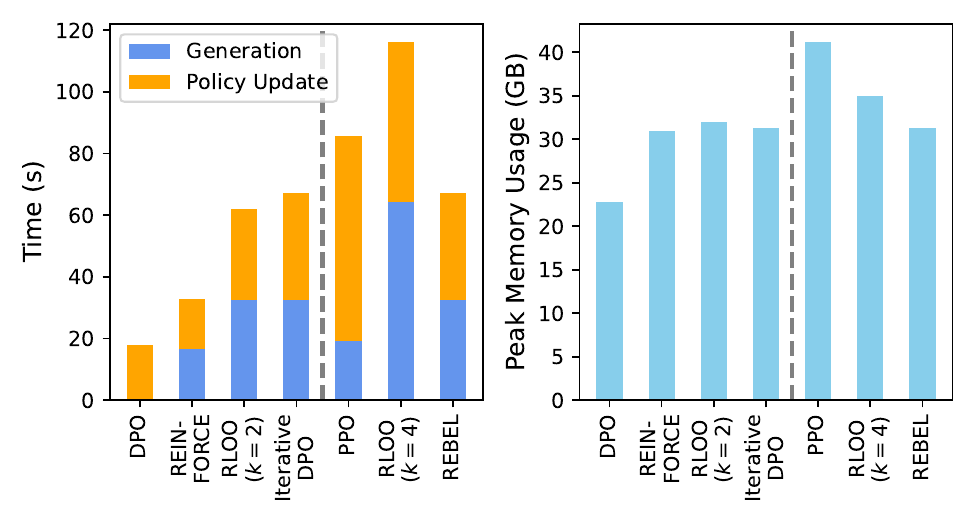}
  \end{center}
  \vskip -0.5cm
    \caption{
       \textbf{Plot of runtime and memory usage.} Baselines on the left-hand side of the dashed line have lower winrates. Methods on the right-hand side of the dashed line have similar winrates to \alg{}.
    } \label{fig:perf}
\end{figure}

{\renewcommand{\arraystretch}{1.1}
\begin{table}[t]\centering
\resizebox{1.0\linewidth}{!}{
\begin{tabular}[t]{ccccc} 
\midrule[0.15ex]
Model size & Algorithm & Winrate ($\uparrow$) & RM Score ($\uparrow$) & KL($\pi || \pi_{ref}$) ($\downarrow$)
\\  \midrule[0.05ex]
\multirow{4}{*}{1.4B} & SFT & 24.9 ($\pm2.73$) & -0.51 ($\pm0.05$) & - \\
                      & DPO & 42.7 ($\pm1.79$) & 0.10 ($\pm0.02$) & \underline{29.6} ($\pm0.63$) \\
                      & Iterative DPO & 47.2 ($\pm1.34$) & 1.73 ($\pm0.05$) & 29.7 ($\pm0.57$) \\
                      & PPO & \underline{51.7} ($\pm1.42$) & \underline{1.74} ($\pm0.04$) & \textbf{29.3} ($\pm0.61$) \\
                      & \alg & \textbf{55.1} ($\pm1.35$) & \textbf{1.84} ($\pm0.04$) & 32.6  ($\pm0.59$)
\\  \midrule[0.05ex]
\multirow{4}{*}{2.8B} & SFT & 28.2 ($\pm2.31$) & -0.38 ($\pm0.06$) & - \\
                      & DPO & 53.7 ($\pm1.63$) & \underline{2.40} ($\pm0.02$) & 64.3 ($\pm1.25$)\\
                      & Iterative DPO & 63.1 ($\pm1.41$) & 2.37 ($\pm0.03$) & \underline{28.1} ($\pm0.51$) \\
                      & PPO & \underline{67.4} ($\pm1.30$) & 2.37 ($\pm0.03$) & \textbf{27.2} ($\pm0.55$) \\
                      & \alg & \textbf{70.2} ($\pm1.32$) & \textbf{2.44} ($\pm0.02$) & 29.0 ($\pm0.60$) \\
\midrule[0.15ex]
\end{tabular}
\quad
\begin{tabular}[t]{ccc} 
\midrule[0.15ex]
 Model size & Algorithm & Winrate ($\uparrow$)
\\  \midrule[0.05ex]
\multirow{7}{*}{6.9B} & SFT & 45.2 ($\pm2.49$) \\
                      & DPO & 68.4 ($\pm2.01$) \\
                      & REINFORCE & 70.7$^*$ \\
                      & PPO & 77.6$^\ddag$ \\
                      & RLOO ($k=2$) & 74.2$^*$ \\
                      & RLOO ($k=4$) & \underline{77.9$^*$} \\
                      & \alg{} & \textbf{78.1} ($\pm1.74$) \\
  \midrule[0.15ex]
\multicolumn{3}{l}{\footnotesize{* directly obtained from \cite{ahmadian2024basics}}} \\
\multicolumn{3}{l}{\footnotesize{$\ddag$ directly obtained from \cite{huang2024n}}} \\
\end{tabular}} \\
\vskip 0.2cm
\caption{\textbf{Results on \tldr{} Summarization.} Results are averaged over three seed and the standard deviations across seeds are in parentheses. The best-performing method for each size and metric is highlighted in bold and the second best is underlined. \alg{} outperforms all baselines on winrate. \label{tab:result}}
\vskip -0.5cm
\end{table}}

\noindent \textbf{Quality Analysis.}
Table~\ref{tab:result} presents a comparison between \alg{} and baseline methods. Notably, \alg{} outperforms all the baselines on RM score with 1.4B and 2.8B parameters with a slightly larger KL than PPO. In addition, \alg{} achieves the highest winrate under GPT4 when evaluated against human references, indicating the benefit of regressing the relative rewards. An ablation analysis on parameter $\eta$ is in Appendix~\ref{app:ablation} and the trade-off between the reward model score and KL-divergence is discussed in Appendix~\ref{app:kl_score}.


\noindent \textbf{Runtime \& Memory Analysis.}
We analyze the runtime and peak memory usage for 2.8B models with REINFORCE, RLOO, PPO, DPO, Iterative DPO, and \alg{}. The runtime includes both the generation time and the time required for policy updates. Both runtime and peak memory usage are measured on A6000 GPUs using the same hyperparameters detailed in Appendix~\ref{app:hyper_detail} for a batch of $512$ prompts. The measurements are averaged over $100$ batches. Methods are ascendingly ordered by winrate. To the right of the dashed line, PPO, RLOO ($k=4$), and \alg{} have the highest winrates, which are comparable among them.

While DPO and REINFORCE are more time and memory-efficient, their performance does not match up to \alg{}, as shown in Table~\ref{tab:result}. RLOO ($k=2$) and Iterative DPO have similar runtime and memory usage as \alg{} since we set $\piref = \pi_t$, making \alg{} also generate twice per prompt. However, both methods have worse performance than \alg{}. Compared to PPO and RLOO ($k=4$), \alg{} demonstrates shorter runtimes and lower peak memory usage. PPO is slow and requires more memory since it needs to update two networks (the policy network and the value network).  RLOO ($k=4$) requires four generations per prompt which makes it slow and less memory efficient. In summary, \textit{\textbf{compared to the two baselines (PPO and RLOO ($k=4$)) that achieve similar winrates as \alg{}, \alg{} is more computationally tractable and simpler to implement.}}

\subsection{General Chat \label{sec:general_chat_main}}
\noindent \textbf{Task.} We consider a general chat scenario where $x$ is a prompt from the user and $y$ is a response. We adapt the setting from~\citet{starling2023}, using OpenChat-3.5~\citep{wang2024openchat} as the base model, Starling-RM-7B-alpha~\citep{starling2023} as the reward model, and the Nectar dataset~\citep{starling2023}. This setup enables a direct comparison between \alg{} and APA \citep{starling2023} which is used to train Starling-LM-7B-alpha.

\noindent \textbf{Evaluation.}  Following previous works, we use AlpacaEval 2.0~\citep{dubois2024lengthcontrolled}, MT-Bench~\citep{zheng2023judging}, and Open LLM Leaderboard~\citep{open-llm-leaderboard} as metrics. AlpacaEval 2.0 uses prompts from AlpacaFarm~\citep{dubois2024alpacafarm} to compare model responses against a reference response generated by GPT-4-Turbo. We report the winrate over the reference responses. MT-Bench consists of 80 open-ended questions on various topics. Answers are scored directly by GPT-4. Open LLM Leaderboard consists of MMLU~\citep{hendrycks2021measuring}, GSM8K~\citep{gsm8k}, Arc~\citep{clark2018think}, Winogrande~\citep{WINOGRANDE}, TruthfulQA~\citep{lin2022truthfulqa}, and HellaSwag~\citep{zellers2019hellaswag}. The prompts of the tasks consist of zero or few-shot samples.

\begin{table}[t]\centering
\resizebox{1.0\linewidth}{!}{
\begin{tabular}[t]{c|c|cc|cccccc} 
\midrule[0.15ex]
\multirow{2}{*}{Method} & \multirow{2}{*}{MT-Bench} & \multicolumn{2}{c|}{AlpacaEval 2.0} & MMLU & GSM8K & Arc  & Winogrande  & TruthfulQA  & HellaSwag \\
 &  & LC Win Rate & Win Rate & (5-shot) & (5-shot) & (25-shot) & (5-shot) & (0-shot) & (10-shot) \\
\midrule[0.05ex]
Base & 7.69 & 12.2 & 11.7 & 63.6 & 68.5 & \textbf{64.9} & 80.6 & 47.3 & 84.7 \\
APA & 7.43 & 14.7 & \textbf{14.2} & 63.4 & 68.0 & \textbf{64.9} & \textbf{81.1} & 47.3 & 84.8 \\
\alg{} & \textbf{8.06} & \textbf{17.3} & 12.8 & \textbf{63.7} & \textbf{68.8} & 64.3 & 80.4 & \textbf{48.2} & \textbf{85.0} \\
\midrule[0.15ex]
\end{tabular}}
\caption{\textbf{Results on General Chat.} The best-performing method for each metric is highlighted in bold. Note that the APA result is directly obtained by evaluating the Starling-LM-7B-alpha model. \label{tab:chat_result}}
\vskip -0.5cm
\end{table}

\noindent \textbf{Quality Analysis.}
The results between models trained with \alg{} and baseline methods are shown in Table~\ref{tab:chat_result}. For MT-Bench and AlpacaEval 2.0, under the same setup, \alg{} outperforms APA~\citep{zhu2023finetuning} on both metrics, demonstrating the effectiveness of \alg{} under chat setting and \emph{its superior performance over APA}. 
For the metrics on Open LLM Leaderboard, \alg{} is able to enhance the performance of GSM8K and HellaSwag and maintain the overall average as the base models. Similar values on MMLU as base models indicate that we preserve the basic capability of the pre-trained model during the RL fine-tuning process. We include a breakdown of MT-Bench in Appendix~\ref{app:mt_bench}.

\subsubsection{Ablation: batch size and data sampling distributions \label{sec:general_chat_ab}}

\noindent \textbf{Task.}
In the previous section, we sample $y$ and $y'$ from $\pi_t(\cdot|x)$ and we use small batch size with $|\Dcal_t| = 32$. In this section, we investigate the alternative sampling distribution described in Remark~\ref{remark:best_worst_of_n}. Specifically, at each iteration, we generate $5$ responses from $\pi_t$ for each prompt in the \textit{entire dataset} (i.e.,  $|\Dcal_t|$ is the size of the entire dataset), rank them based on the reward model, and set $y$ to be the best of the five responses, and $y'$ to be the worst of the five responses. 
We perform $3$ iterations for this setup with Meta-Llama-3-8B-Instruct~\citep{llama3} as the base model, ArmoRM-Llama3-8B-v0.1~\citep{ArmoRM} as the reward model, and the UltraFeedback dataset~\citep{cui2023ultrafeedback}. We compare \alg{} with DPO which is also trained for one epoch on the entire dataset with best-of-5 as $y_w$ and worst-of-5 as $y_l$ sampled from $\pi_{0}$. In other words, the training data used for the first iteration of \alg{} is the same as the one we use for DPO.\footnote{Directly training DPO on the original Ultrafeedback preference dataset does not provide strong performance under AlpacaEval (e.g., the LC-winrate is around 28\%, see \cite{song2024understanding}). So for a fair comparison to \alg{}, we train DPO on the online data generated by $\pi_0$ and labeled by the reward model.} 

\noindent \textbf{Evaluation.}
We follow the same evaluation methods as the previous section and include Arena Hard (AH)~\citep{li2024crowdsourceddatahighqualitybenchmarks} in our analysis.

\begin{table}[t]\centering
\resizebox{1.0\linewidth}{!}{
\begin{tabular}[t]{c|c|cc|cccccc|c} 
\midrule[0.15ex]
\multirow{2}{*}{Method} & \multirow{2}{*}{MT-Bench} & \multicolumn{2}{c|}{AlpacaEval 2.0} & MMLU & GSM8K & Arc  & Winogrande  & TruthfulQA  & HellaSwag & \multirow{2}{*}{AH}   \\
 &  & LC Win Rate & Win Rate & (5-shot) & (5-shot) & (25-shot) & (5-shot) & (0-shot) & (10-shot) & \\
\midrule[0.05ex]
Base & 8.10 & 22.9 & 22.6 & 65.8 & 75.3 & \textbf{62.0} & 75.5 & 51.7 & 78.7 & 22.3 \\
DPO & 8.11 & 44.9 & 41.6 & 66.1 & 74.6 & 61.3 & 75.5 & \textbf{51.8} & \textbf{78.9} & 34.0 \\
\midrule[0.02ex]
\alg{} (iter 1) & \textbf{8.13} & 48.3 & 41.8 & \textbf{66.3} & \textbf{75.8} & 61.7 & \textbf{75.9} & \textbf{51.8} & 78.7 & \textbf{34.5} \\
\alg{} (iter 2) & 8.07 & \textbf{50.0} & \textbf{48.5} & 65.9 & 75.4 & 61.3 & 75.5 & 50.3 & 78.6 & 30.4\\
\alg{} (iter 3) & 8.01 & 49.7 & 48.1 & 66.0 & 75.7 & 61.1 & 75.7 & 49.8 & 78.8 & 30.0\\
\midrule[0.15ex]
\end{tabular}}
\caption{\textbf{Ablation Results.} In this table, \alg{} uses a \emph{larger batch size (the entire dataset) with the best-of-N and worst-of-N ($N=5$) of $\pi_t$ as the sampling distributions for generating pairs $y,y'$}. The best-performing method for each metric is highlighted in bold. Note that DPO is trained on the \emph{online data} generated by the base model and labeled by the RM. \label{tab:ab_chat_result}}
\vskip -0.5cm
\end{table}

\noindent \textbf{Quality Analysis.} Results in Table~\ref{tab:ab_chat_result} show that \alg{} can significantly improve the base model's performance, especially on AlpacaEval 2.0 and Arena Hard. Compared to DPO, the model trained by \alg{} with 1 iteration is better in almost all datasets, demonstrating the benefit of using the fine-grained reward gap in policy optimization over just the zero-one labels. In this large batch setting, we find that more iterations in general do not help performance. We conjecture that this is the issue of overfitting to the training dataset.  A more diverse and larger dataset can potentially address this issue.


\subsection{Image Generation}
\noindent \textbf{Task.}  We also consider the setting of image generation, where, given a consistency model \citep{song2023consistency} and a target reward function, we seek to train the consistency model to output images that garner a higher reward. We use 45 common animals as generation prompts similar to \cite{black2023training, oertell2024rl} and the latent consistency model \citep{luo2023latent} distillation of the Dreamshaper v7 model, a finetune of stable diffusion \citep{rombach2021highresolution}. 
\begin{figure}[t]
  \begin{center}
     \includegraphics[scale=0.6,trim={10 10 0 0},clip]{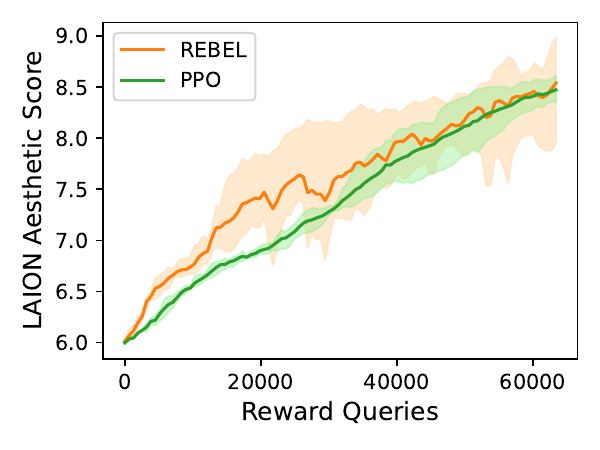}
  \end{center}
  \vskip -0.35cm
    \caption{\textbf{Learning curves} as a function of reward queries to the LAION aesthetic predictor. The colored areas represent 95\% CIs.}
    \label{fig:aesthetic_learning_curve}
\end{figure}
We compare \alg{} to a clipped, policy gradient objective \citep{black2023training, fan2024reinforcement, oertell2024rl} with the aim to optimize aesthetic quality to obtain a high reward from the LAION aesthetic score predictor \citep{schuhmann2022}. This baseline does not use critics or GAE for advantage estimates. However, the clipping objective is clearly motivated by PPO, and thus, we simply name this baseline as PPO.

\noindent \textbf{Evaluation.} 
We evaluate on the reward under the LAION aesthetic reward model for an equal number of reward queries/samples generated and an equal number of gradient updates. The aesthetic predictor is trained to predict human-labeled scores of images on a scale of 1 to 10. Images that tend to have the highest reward are artwork. Following \citet{agarwal2021deep}, we report inter-quartile means (IQM) with 95\% confidence intervals (CIs) across three seeds for both \alg{} and PPO. The CIs were calculated with percentile bootstrap with stratified sampling over three random seeds. 

\begin{figure}[t]
    \centering
    \includegraphics[scale=0.3,trim={10 20 10 20},clip]{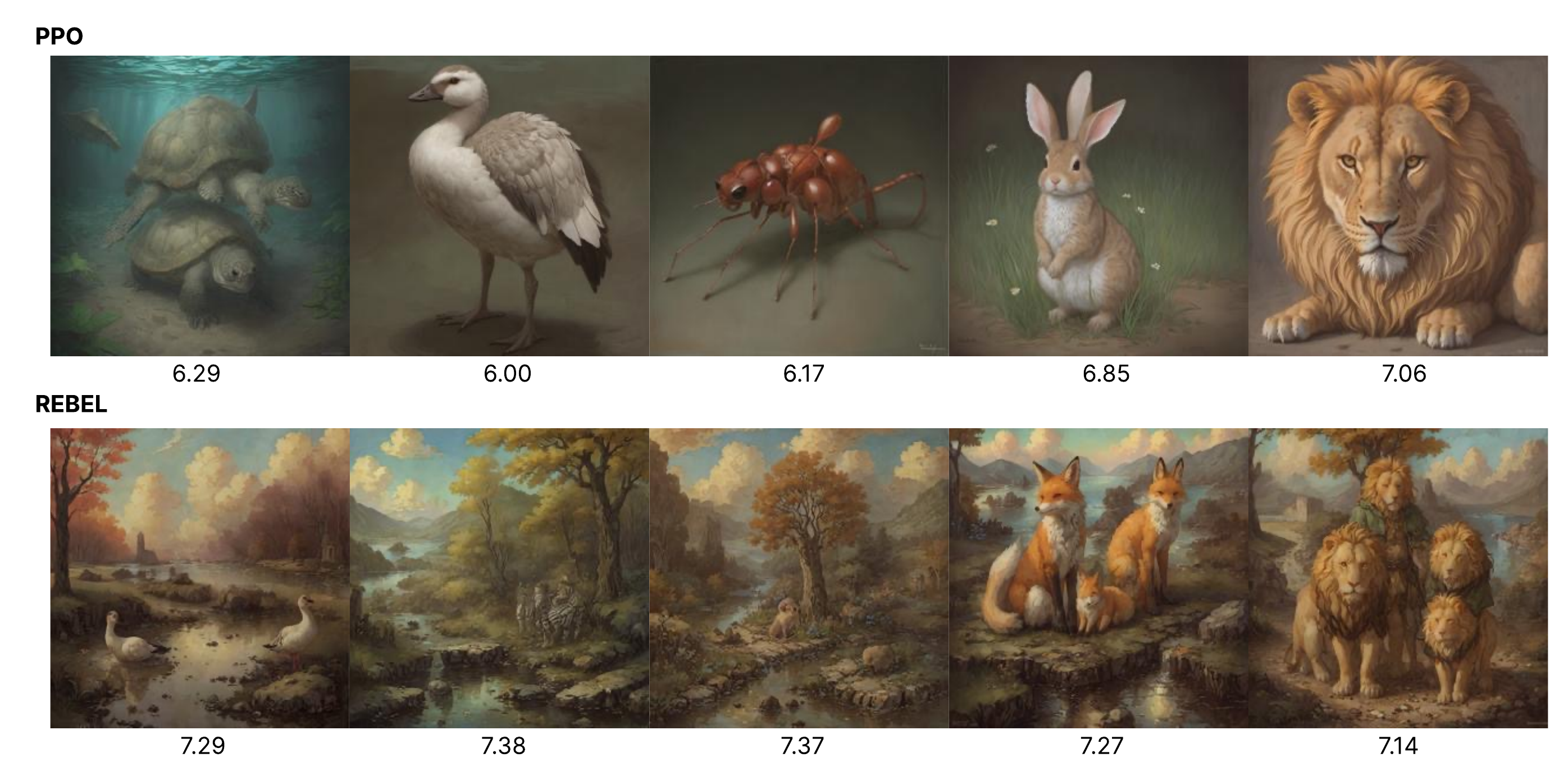}
    \caption{\textbf{Generated images} using PPO and \alg{} during an intermediate checkpoint. At the same number of epochs, \alg{} observes a higher reward under the reward model. This can further be seen by the more diverse background of images generated from \alg{} with less training time.}
\end{figure}


\noindent \textbf{Quality Analysis.} Figure~\ref{fig:aesthetic_learning_curve} shows \alg{} optimizes the consistency model faster during the beginning of training and eventually achieves a performance similar to that of PPO. For our experiments, we tuned both batch size and learning rate for our algorithms, testing batch sizes of $[4, 8, 16]$ per GPU and learning rates $[1\text{e}-4, 3\text{e}-4, 6\text{e}-4, 1\text{e}-3]$. The main difference in implementation between PPO and \alg{} is the replacement of the clipped PPO objective with our regression objective.
To maximize LAION-predicted aesthetic quality, both \alg{} and PPO transform a model that produces plain images into one that produces artistic drawings. We found across multiple seeds that \alg{} produced lush backgrounds when compared to PPO's generations. Please see Appendix~\ref{app:img_gen} for more examples of generated images.

\section{Related Work}
\noindent \textbf{Policy Gradients.} Policy gradient (PG) methods \citep{mirrordescent, reinforce, actor_critic, npg, ppo} are a prominent class of RL algorithms due to their direct, gradient-based policy optimization, robustness to model misspecification \citep{agarwal2020pc}, and scalability to modern AI applications from fine-tuning LLMs \citep{stiennon2022learning} to optimizing text-to-image generators \citep{oertell2024rl}. 

Broadly speaking, we can taxonomize PG methods into two families. The first family is based on REINFORCE \citep{reinforce} and often includes variance reduction techniques \citep{Kool2019Buy4R, richter2020vargrad, zhu2023principled}. While prior work by \citet{ahmadian2024basics} has shown that REINFORCE-based approaches can outperform more complex RL algorithms like PPO on LLM fine-tuning tasks like \tldr, we find that a properly optimized version of PPO still out-performs a REINFORCE baseline. The second family is \textit{adaptive} PG techniques that \textit{precondition} the policy gradient (usually with the inverse of the Fisher Information Matrix) to ensure it is \textit{covariant} to re-parameterizations of the policy, which include NPG \citep{npg,bagnell2003covariant} and its practical approximations like TRPO \citep{schulman2015trust} and PPO \citep{ppo}. Intuitively, the preconditioning ensures that we make small changes in terms of action distributions, rather than in terms of the actual policy parameters, leading to faster and more stable convergence. Unfortunately, computing and then inverting the Fisher Information Matrix is computationally intensive and therefore we often resort to approximations in practice, as done in TRPO. However, these approximations are still difficult to apply to large-scale generative models, necessitating even coarser approximations like PPO. In contrast, \texttt{REBEL} does not need any such approximations to be implemented at scale, giving us a much closer connection between theory and practice.

\noindent \textbf{Reward Regression.} The heart of \texttt{REBEL} is a novel reduction from RL to iterative squared loss regression. While using regression to fit either the reward \citep{peters2007reinforcement} or the value \citep{peng2019advantageweighted} targets which are then used to extract a policy have previously been explored, our method instead takes a page from DPO \citep{rafailov2023direct, zhou2023onepreferencefitsall} and inverse RL methods \citep{pmlr-v97-jacq19a,watson2023coherent} to implicitly parameterize the reward regressor in terms of the policy. This collapses the two-stage procedure of prior methods into a single step.

\noindent \textbf{Preference Fine-Tuning (PFT) of Generative Models.} RL has attracted renewed interest due to its central role in ``aligning'' language models  -- i.e., adapting their distribution of prompt completions towards the set of responses preferred by human raters. 

One family of techniques for PFT, often referred to as Reinforcement Learning from Human Feedback (RLHF) involves first fitting a reward model (i.e. a classifier) to the human preference data and then using this model to provide reward values to a downstream RL algorithm (often PPO) \citep{NIPS2017_rlhf, ziegler2020finetuning}. LLMs fine-tuned by this procedure include GPT-N \citep{gpt4}, Claude-N \citep{claude3}, and Llama-N \citep{llama3}. Similar approaches have proved beneficial for tasks like summarization \citep{stiennon2022learning}, question answering \citep{nakano2022webgpt}, text-to-image generation \citep{lee2023aligning}, and instruction following \citep{ouyang2022training}. \looseness=-1

Another family of techniques for PFT essentially treats the problem as supervised learning and uses a variety of ranking loss functions. It includes DPO \citep{rafailov2023direct}, IPO \citep{azar2023general}, and KTO \citep{kto}. These techniques are simpler to implement as they remove components like an explicit reward model, value network, and on-policy training from the standard RLHF setup. However, recent work finds their performance to be lesser than that of on-policy methods \citep{lambert2024rewardbench, tajwar2024preference}, which agrees with our findings. This is perhaps caused by their lack of interaction during training, leading to the well-known covariate shift/compounding error issue \citep{ross2011reduction, swamy2021moments} and the associated lower levels of performance.

The third family of PFT techniques combines elements from the previous two: it involves running an offline algorithm \textit{iteratively}, collecting on-policy preference feedback from either a supervisor model such as GPT4 \citep{rosset2024direct, xiong2024iterative, guo2024direct} or from a preference model fit on human data \citep{calandriello2024human}. All of these approaches can be considered instantiations of the general SPO reduction proposed by \citet{swamy2024minimaximalist}, which itself can be thought of as a preference-based variant of DAgger \citep{ross2011reduction}. Recent work by \citet{tajwar2024preference} confirms the empirical strength of these techniques which leverage additional online data. Our approach fits best into this family of techniques -- we also iteratively update our model by solving a sequence of supervised learning problems over on-policy datasets. However, \texttt{REBEL} comes with several key differentiating factors from the prior work. Online versions of DPO or IPO \citep{xiong2024iterative, tajwar2024preference, guo2024direct, calandriello2024human, munos2023nash} essentially use a reward / preference model to generate binary win-loss labels while \texttt{REBEL} actually uses the output of the reward model as a regression target, 
taking advantage of this more nuanced feedback. In contrast to \citet{rosset2024direct}, algorithmically, \alg{} does not use any online preference feedback from GPT4 nor does it require to generate a large number of responses per prompt, both of which are extremely expensive as reported by \cite{rosset2024direct}. Theoretically, we are able to prove policy performance bounds under a much weaker coverage condition. 
Unlike \citet{mao2024don} that regularize to the initial policy $\pi_0$ during updates, we perform \textit{conservative} updates by regularizing $\pi_{t+1}$ to $\pi_t$. When doing the former, it is difficult to prove convergence or monotonic improvement as the current policy can just bounce around a ball centered at $\pi_0$, a well-known issue in the theory of approximate policy iteration \citep{kakade2002approximately, munos2003error}. In contrast, by incorporating the prior policy's probabilities into our regression problem, we are able to prove stronger guarantees for \texttt{REBEL}. When applying \alg{} to the general preference setting, our algorithm shares some similarities with the concurrent work of \cite{wu2024self}, which also leverages the DPO reparameterization trick to cleanly implement an Online Mirror Descent approach to computing a minimax winner via self-play. The key difference is that \alg{} uses paired responses, while the algorithm of \cite{wu2024self} does not. Using paired responses, \alg{} is able to cancel out the partition function and therefore does not need to resort to heuristic approximations of it (specifically, assuming the partition function is always equal to a constant).
Furthermore, we can run \texttt{REBEL} with datasets consisting of a mixture of on-policy and off-policy data with strong guarantees, enabling \textit{hybrid training}, as previously explored in the RL \citep{song2023hybrid, ball2023efficient,zhou2023offline} and inverse RL \citep{ren2024hybrid} literature.\looseness=-1

\section{Conclusion and Future Work}
We propose \texttt{REBEL}, an RL algorithm that reduces the problem of RL to solving a sequence of relative reward regression problems on iteratively collected datasets. In contrast to policy gradient approaches that require additional networks and heuristics like clipping to ensure optimization stability, it suffices for \texttt{REBEL} to merely drive down error on a least squares problem, making it strikingly simple to implement and scale. In theory, \texttt{REBEL} matches the best guarantees we have for RL algorithms in the agnostic setting, while in practice, \texttt{REBEL} is able to match and sometimes outperform methods that are far more complex to implement or expensive to run across both language modeling and guided image generation tasks.

There are several open questions raised by our work. The first is whether using a loss function other than square loss (e.g. log loss or cross-entropy) could lead to better performance in practice \citep{farebrother2024stop} or tighter bounds (e.g. first-order / gap-dependent) in theory \citep{foster2021efficient,wang2023benefits, wang2024more}. The second is whether, in the general (i.e. non-utility-based) preference setting, the coverage condition assumed in our analysis is necessary -- we conjecture it is. Relatedly, it would be interesting to explore whether using \textit{preference} (rather than reward) models to provide supervision for \texttt{REBEL} replicates the performance improvements reported by \citet{swamy2024minimaximalist, munos2023nash, calandriello2024human}. Third, while we focus primarily on the bandit setting in the preceding sections, it would be interesting to consider the more general RL setting and explore how offline datasets can be used to improve the efficiency of policy optimization via techniques like resets \citep{bagnell2003policy, Ross2014ReinforcementAI, Swamy2023InverseRL, chang2023learning,chang2024dataset, ren2024hybrid, dice2024efficient}.

\section*{Acknowledgements}

ZG and JDC are supported by LinkedIn under the Cornell-LinkedIn Strategic Partnership. GKS is supported by his family and friends. KB is supported by NSF under grant No. 2127309 to the Computing Research Association for the CIFellows Project. TJ acknowledges support of the NSF Awards 2312865 and 2311521. JDL acknowledges support of the NSF CCF 2002272, NSF IIS 2107304, and NSF CAREER Award 2144994. WS acknowledges funding from NSF IIS-2154711, NSF CAREER 2339395, DARPA LANCER: LeArning Network CybERagents, and Cornell Infosys Collaboration.

\newpage
\bibliography{cite}
\bibliographystyle{achemso}


\appendix

\newpage 
\section{Proof of Claim~\ref{claim:newton}}
\label{app:gauss_newton}


We prove claim~\ref{claim:newton} in this section. We start from deriving the Fisher information matrix. 
\begin{align*}
  F_t &:= \frac{1}{\eta^2}  \EE_{x,y\sim \pi_t, y'\sim \piref} \left(\nabla_{\theta} \ln \pi_{\theta_t}(y|x) - \nabla_{\theta} \ln \pi_{\theta_t}(y'|x)\right)\left( \nabla_{\theta} \ln \pi_{\theta_t}(y|x) - \nabla_{\theta} \ln \pi_{\theta_t}(y'|x) \right)^\top \\
& = \frac{2}{\eta^2} 
\EE_{x, y\sim \pi_{mix}} \nabla_{\theta} \ln\pi_{\theta_t}(y|x) \nabla_{\theta} \ln\pi_{\theta_t}(y|x)^{\top}
\end{align*}  
where the last equality uses the fact that cross terms from completing the square are zero. Now recall Eq.~\ref{eq:gauss_netwon_step} which is an ordinarly least square regression problem. The minimum norm solution of the least square regression problem is:
\begin{align*}
\delta  & = (\eta/2) \tilde F_t^{\dagger} \left( \EE_{x,y\sim \pi_t, y'\sim \piref} \left(  \nabla_\theta \ln\pi_{\theta_t}(y|x) - \nabla_\theta\ln \pi_{\theta_t}(y'|x)  \right) \left( r(x,y) -r(x,y')\right)    \right)  \\
& = (\eta/2) \tilde F_t^{\dagger} \Big( \EE_{x,y\sim \pi_t }  \left[ \nabla_{\theta} \ln \pi_{\theta_t}(y|x)  r(x,y)\right]   + \EE_{x,y'\sim \piref }  \left[ \nabla_{\theta} \ln \pi_{\theta_t}(y'|x)  r(x,y')\right] \\
&\quad\quad - \EE_{x,y\sim \pi_t, y'\sim \piref} \nabla_{\theta}\ln\pi_{\theta_t}(y'|x)  r(x,y) \Big) \\
& = (\eta/2) \tilde F_t^{\dagger} \Big( \EE_{x,y\sim \pi_t }  \left[ \nabla_{\theta} \ln \pi_{\theta_t}(y|x)   [ r(x,y) - \EE_{y'\sim \pi_t(\cdot|x)} r(x,y')\right]   \\
& \qquad \qquad + \EE_{x,y\sim \piref }  \left[ \nabla_{\theta} \ln \pi_{\theta_t}(y|x) [ r(x,y) - \EE_{y'\sim \pi_t(\cdot|x)} r(x,y') \right] \Big) \\
& = (\eta) \tilde F_t^{\dagger} \left( \EE_{x,y\sim (\pi_t+\piref)/2 }  \left[ \nabla_{\theta} \ln \pi_{\theta_t}(y|x)   [  A^{\pi_t}(x,y)\right]  \right)
\end{align*} where we again use the fact that  $\EE_{y\sim \pi_{\theta_t}(\cdot | x)}\nabla_{\theta} \ln\pi_{\theta_t}(y|x) g(x) = 0$ for any function $g(x)$, and we define \emph{Advantage} $A^{\pi}(x,y) := r(x,y) - \EE_{y'\sim \pi(\cdot|x) }r(x,y')$. 

\newpage
\section{Proof of Claim~\ref{claim:rloo}}
\label{app:rloo}

We prove claim~\ref{claim:rloo} in this section. We start by approximating our predictor $\frac{1}{\eta} \ln \pi_{\theta}(y|x) / \pi_{\theta_t}(y|x)$ by its first order Taylor expansion at $\theta_t$: $ \frac{1}{\eta} \left( \ln \pi_{\theta}(y|x)  - \ln \pi_{\theta_t}(y|x) \right) \approx \frac{1}{\eta} \nabla_{\theta} \ln \pi_{\theta_t}(y|x)^\top (\theta - \theta_t)$,  where $\approx$ indicates that we ignore higher order terms in the expansion. Setting $\delta := \theta - \theta_t$ and replace $\frac{1}{\eta} \left( \ln \pi_{\theta}(y|x)  - \ln \pi_{\theta_t}(y|x) \right)$ by its first order approximation in Eq.~\ref{eq:least_square}, we arrive at:
\begin{align}
\min_{\delta} \sum_{(x,y,y')\in \Dcal_t} \left( \frac{1}{\eta} \left( \nabla_{\theta} \ln \pi_{\theta_t}(y|x) - \nabla_{\theta} \ln \pi_{\theta_t}(y'|x) \right)^\top \delta - \left( r(x, y) - r(x, y') \right)  \right)^2
\label{eq:gauss_netwon_step_finite}
\end{align}
under finite setting.

Following the previous derivation, we have the unbiased estimate of Fisher information matrix under the finite setting as:
\begin{align*}
  \hat{F}_t &:= \frac{2}{\eta^2N} 
\sum_{x_n,y_n\sim \pi_{mix}} \nabla_{\theta} \ln\pi_{\theta_t}(y_n|x_n) \nabla_{\theta} \ln\pi_{\theta_t}(y_n|x_n)^{\top}
\end{align*}  
Since Eq.~\ref{eq:gauss_netwon_step_finite} is an ordinarly least square regression problem. The minimum norm solution of the least square regression problem is:
\begin{align*}
\delta  & = (\eta / 2) \tilde{\hat{F}}_t^{\dagger} \frac{1}{N} \sum_{n} \left( \nabla_\theta \ln\pi_{\theta_t}(y_n|x_n) - \nabla_\theta\ln \pi_{\theta_t}(y'_n|x_n) \right)  \left( r(x_n,y_n) -r(x_n,y'_n)\right)\\
& =\eta \tilde{\hat{F}}_t^{\dagger} \frac{1}{2N} \sum_{n}
\left(\nabla\ln\pi_{\theta_t}(y_n|x_n) ( r(x_n, y_n) - r(x_n, y'_n) ) + \nabla\ln\pi_{\theta_t}(y'_n|x_n) ( r(x_n, y'_n) - r(x_n, y_n) )\right)
\end{align*}

\newpage
\section{Extending \alg{} to General Preferences}
\label{app:general_pref}

In the above discussion, we assume we are given access to a ground-truth reward function. However, in the generative model fine-tuning applications of RL, we often need to learn from human \textit{preferences}, rather than rewards. This shift introduces a complication: not all preferences can be rationalized by an underlying utility function. In particular, \textit{intransitive} preferences which are well-known to result from aggregation of different sub-populations or users evaluating different pairs of items on the basis of different features \citep{may1954intransitivity, tversky1969intransitivity, Gardner_2013} cannot be accurately captured by a single reward model. To see this, note that if we have $a \succ b$, $b \succ c$, and $c \succ a$, it is impossible to have a reward model that simultaneously sets $\hat{r}(a) > \hat{r}(b)$, $\hat{r}(b) > \hat{r}(c)$, and $\hat{r}(c) > \hat{r}(a)$. As we increase the space of possible choices to that of all possible prompt completions, the probability of such intransitivities sharply increases \citep{dudik2015contextual}, as reflected in the high levels of annotator disagreement in LLM fine-tuning datasets \citep{touvron2023llama}. Thus, rather than assuming access to a reward model, in such settings, we assume access to a \textit{preference model} \citep{munos2023nash,swamy2024minimaximalist,rosset2024direct,ye2024theoretical}. 
\subsection{A Game-Theoretic Perspective on Learning from Preferences}
More specifically, for any tuple $(x,y,y')$, we assume we have access to $\mathcal{P}(y \succ y' | x)$: the probability that $y$ is preferred to $y'$. We then define our preference model $l$ as
\begin{equation}
    l(x, y, y') \triangleq 2 \cdot \mathcal{P}(y \succ y' | x) - 1.
\end{equation}

Observe that $l(x,y,y') \in [-1, 1]$ is skew-symmetric, i.e., $l(x,y,y) = 0$, $l(x,y,y') + l(x, y', y) = 0$ for all $x\in\Xc, y,y'\in\Yc$. If the learner can only receive a binary feedback $o\in\{0,1\}$ indicating the preference between $y$ and $y'$, we assume $o$ is sampled from a Bernoulli distribution with mean $\mathcal{P}(y \succ y' | x)$, where $o=1$ means that $y$ is preferred over $y'$ and $0$ otherwise.  

Given access to such a preference model, a solution concept to the preference aggregation problem with deep roots in the social choice theory literature \citep{kreweras1965aggregation, fishburn1984probabilistic, RePEc:ecm:emetrp:v:41:y:1973:i:2:p:285-97, 10.2307/1880533} and the dueling bandit literature \citep{yue2012k, dudik2015contextual} is that of a minimax winner (MW) $\pimw$: the Nash Equilibrium strategy of the symmetric two-player zero-sum game with $l$ as a payoff function. In particular, due to the skew-symmetric property of $l$, \citet{swamy2024minimaximalist} proved that there exists a policy $\pimw$ such that 
\begin{align*}
\max_{\pi}\Ebb_{x\sim \rho,y\sim \pi(\cdot|x), y'\sim\pimw(\cdot|x)}\left[l(x,y,y')\right]=\min_{\pi}\Ebb_{x\sim\rho,y\sim \pimw(\cdot|x),y'\sim\pi(\cdot|x)}\left[l(x,y,y')\right].
\end{align*}
This implies that $(\pimw,\pimw)$ is a Nash Equilibrium \citep{wang2023is,munos2023nash, swamy2024minimaximalist, ye2024theoretical}. As is standard in game solving, our objective is to obtain an $\eps$-approximate MW $\hpi$ measured by the duality gap (DG):
\begin{align*}
&\DG(\hpi):=\max_{\pi}\Ebb_{x\sim \rho,y\sim \pi(\cdot|x), y'\sim\hpi(\cdot|x)}\left[l(x,y,y') \right]-\min_{\pi}\Ebb_{x\sim\rho,y\sim \hpi(\cdot|x),y'\sim\pi(\cdot|x)}\left[l(x,y,y')\right]\leq\eps.
\end{align*}
In the following discussion, we will use $l(x,y,\pi)$ to denote $\EE_{y'\sim\pi(\cdot|x)}[l(x,y,y')]$ and $l(\pi,\pi')$ to denote $\Ebb_{x\sim\rho, y\sim\pi(\cdot|x),y'\sim\pi'(\cdot|x)}[l(x,y,y')]$ for notational convenience.

\subsection{Self-Play Preference Optimization (SPO) with \alg{} as Base Learner}
We can straightforwardly extend \alg{} to the general preference setting via an instantiation of the Self-Play Preference Optimization (SPO) reduction of \citet{swamy2024minimaximalist}. In short, \citet{swamy2024minimaximalist} prove that rather than performing adversarial training, we are able to perform a simple and stable \textit{self-play} procedure while retaining strong theoretical guarantees. Practically, this corresponds to sampling at leas two completions from the current policy, querying a learned preference / supervisor model on each pair, and using the winrate for each completion as its reward. We will now describe how we can adapt \alg{} to this mode of feedback.

Assuming that we can query the preference oracle $l(x,y,y')$ at will, we can modify the least square objective Eq.~\eqref{eq:least_square} to
\begin{align*}
\theta_{t+1} := \argmin_{\theta} \sum_{x,y,y',y''\in \Dcal_t} \left(\frac{1}{\eta} \left(\ln\frac{ \pi_{\theta}(y|x)}{ \pi_{\theta_t}(y|x)} - \ln\frac{ \pi_{\theta}(y'|x)}{ \pi_{\theta_t}(y'|x)}  \right) - \left( l(x, y, y'') - l(x, y', y'') \right)  \right)^2
\end{align*} where $x\sim \rho, y\sim \pi_t(\cdot | x),y''\sim \pi_t(\cdot | x), y'\sim \piref(\cdot | x)$. {When the exact value of $l(x,y,y')$ is unavailable but only a binary preference feedback $o_{y,y'} \in \{0,1\}$ sampling from Bernoulli with mean $l(x,y,y')$ is available, we can just replace $l(x, y, y'') - l(x, y', y'')$ by $o_{y,y'} - o_{y',y''}$.    } It is easy to see that the Bayes optimal of the above least square regression problem is equal to:
\begin{align*}
 \EE_{y''\sim \pi_t(\cdot | x)}  l(x, y, y'') - \EE_{y''\sim \pi_t(\cdot | x)}  l(x, y', y'')  =   l(x, y, \pi_t) -  l(x, y', \pi_t).  
\end{align*} 
\citet{swamy2024minimaximalist} define an iteration-dependent reward $r_t(x,y) := \EE_{y''\sim \pi_t(\cdot | x)}  l(x, y, y'')=l(x,y,\pi_t)$. Thus, the above regression problem can be understood as an extension of \alg{} to the setting where the reward function changes at each iteration $t$. \citet{swamy2024minimaximalist} shows that running the exact MD (Eq.~\ref{eq:md}) with this iteration-dependent reward function $r_t$ leads to fast convergence to an approximate Minimax Winner, a property that we will use to provide the regret bound of \alg{} in the general preference setting while accounting for nonzero mean squared error. 

\newpage
\section{Proof of Theorem~\ref{thm:main}}
\label{app:proof_of_main}
In this section, we provide the proof of theorem~\ref{thm:main}. For notation simplicity, throughout the proof, we denote $\pi_t$ for $\pi_{\theta_t}$, and define $f_t(x,y):= \frac{1}{\eta} \ln \frac{ \pi_{t+1}(y|x)}{ \pi_t(y|x) }$.

The following lemma shows that the learned function $f_t$ can predict reward $r$ well under both $\pi_t$ and $\piref$ up to terms that are $y$-independent. 

\begin{lemma}
Consider any $t\in [T]$. Define $\Delta(x,y) = f_t(x,y) - r(x,y)$. Define $\Delta_{\pi_t}(x) = \EE_{y\sim \pi_t(\cdot | x)} \Delta(x,y)$ and $\Delta_{\piref}(x) = \EE_{y\sim \piref(\cdot | x)} \Delta(x,y)$.
Under assumption~\ref{assm:generalization_bound}, for all $t$, we have the following: 
\begin{align}
&\EE_{x, y\sim \pi_t(\cdot | x)} \left( f_t(x,y) -  r(x,y) - \Delta_{\pi_t}(x) \right)^2 \leq \epsilon,  \\
&\EE_{x, y\sim \piref(\cdot | x)}  \left( f_t(x,y) -  r(x,y) - \Delta_{\piref}(x) \right)^2 \leq \epsilon,\\
& \EE_{x} \left( \Delta_{\pi_t}(x) - \Delta_{\piref}(x) \right)^2 \leq \epsilon.
\end{align}\label{lemma:regression_results}
\end{lemma}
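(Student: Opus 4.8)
The plan is to recognize that Assumption~\ref{assm:generalization_bound} is exactly a bound on the second moment of the \emph{difference} $\Delta(x,y) - \Delta(x,y')$, and that all three claimed inequalities fall out of a single bias--variance decomposition of this difference once we exploit the independence of the paired samples $y\sim\pi_t$ and $y'\sim\piref$. First I would rewrite the assumption in the notation of the lemma. Since $f_t(x,y) = \frac1\eta \ln \frac{\pi_{t+1}(y|x)}{\pi_t(y|x)}$, the integrand inside Assumption~\ref{assm:generalization_bound} is precisely $\big(f_t(x,y) - f_t(x,y') - (r(x,y)-r(x,y'))\big) = \Delta(x,y) - \Delta(x,y')$, so the assumption reads
\begin{align*}
\EE_{x\sim\rho,\, y\sim\pi_t(\cdot|x),\, y'\sim\piref(\cdot|x)} \big( \Delta(x,y) - \Delta(x,y') \big)^2 \le \epsilon.
\end{align*}

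Next, I would fix $x$ and decompose the difference around the two conditional means. Writing
\begin{align*}
\Delta(x,y) - \Delta(x,y') = \big(\Delta(x,y) - \Delta_{\pi_t}(x)\big) - \big(\Delta(x,y') - \Delta_{\piref}(x)\big) + \big(\Delta_{\pi_t}(x) - \Delta_{\piref}(x)\big),
\end{align*}
I would expand the square and take the conditional expectation over $y\sim\pi_t$ and $y'\sim\piref$. Because $y$ and $y'$ are drawn independently, the first bracket has zero mean under $\pi_t$, the second has zero mean under $\piref$, and the third is a constant given $x$, every cross term vanishes. This leaves the identity
\begin{align*}
\EE_{y,y'} \big(\Delta(x,y)-\Delta(x,y')\big)^2 = \EE_{y\sim\pi_t}\big(\Delta(x,y)-\Delta_{\pi_t}(x)\big)^2 + \EE_{y'\sim\piref}\big(\Delta(x,y')-\Delta_{\piref}(x)\big)^2 + \big(\Delta_{\pi_t}(x)-\Delta_{\piref}(x)\big)^2.
\end{align*}

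Finally, I would take the expectation over $x\sim\rho$. The three terms on the right become exactly the left-hand sides of the three claimed inequalities (recalling $\Delta(x,y)-\Delta_{\pi_t}(x) = f_t(x,y)-r(x,y)-\Delta_{\pi_t}(x)$, and analogously for $\piref$), and their sum equals the quantity bounded by $\epsilon$ above. Since each of the three terms is nonnegative, each is individually at most $\epsilon$, which yields all three bounds at once. The only step requiring genuine care is the vanishing of the cross terms: this is precisely where the independence of the paired samples $y,y'$ and the exact centering by the \emph{conditional} means $\Delta_{\pi_t}(x),\Delta_{\piref}(x)$ are essential, and it is the reason a single mean-squared-error hypothesis can control all three decomposed quantities simultaneously. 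Beyond verifying this centering, there is no real analytic obstacle; the content of the lemma is entirely in choosing the right decomposition.
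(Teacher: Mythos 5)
Your proposal is correct and is essentially the same argument as the paper's: the paper likewise rewrites the regression error as the sum of the two conditionally centered squared deviations plus the squared gap $\left(\Delta_{\pi_t}(x)-\Delta_{\piref}(x)\right)^2$, observes that all cross terms vanish by the independence of $y\sim\pi_t$ and $y'\sim\piref$ given $x$ together with the exact centering, and concludes each nonnegative term is at most $\epsilon$. Your write-up is, if anything, slightly cleaner by phrasing everything in terms of $\Delta(x,y)$ before decomposing.
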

\begin{proof}
From assumption~\ref{assm:generalization_bound}, we have:
\begin{align*}
&\EE_{x, y_1\sim \pi_t, y_2 \sim \piref}  \left(  f_t(x,y_1) - \Delta_{\pi_t}(x) - r(x,y_1)  - \left( f_t(x,y_2) - \Delta_{\piref}(x) - r(x,y_2)  \right) + \Delta_{\pi_t}(x) - \Delta_{\piref}(x)     \right)^2 \\
& = \EE_{x,y_1\sim \pi_t} \left( f_t(x,y_1) - \Delta_{\pi_t}(x) -  r(x,y_1)  \right)^2 + \EE_{x,y_2\sim \piref} \left(  f_t(x,y_2) - \Delta_{\piref}(x) - r(x,y_2) \right)^2  \\
& - 2 \EE_{x,y_1\sim \pi_t, y_2 \sim \piref}  \left(  f_t(x,y_1) - \Delta_{\pi_t}(x) -  r(x,y_1)  \right)  \left(  f_t(x,y_2) - \Delta_{\piref}(x) -  r(x,y_2) \right) \\
& + 2 \EE_{x,y_1\sim \pi_t} \left(  f_t(x,y_1) - \Delta_{\pi_t}(x) -  r(x,y_1)  \right) (\Delta_{\pi_t}(x) - \Delta_{\piref}(x)) \\
& - 2 \EE_{x,y_2\sim \pi_t} \left(  f_t(x,y_2) - \Delta_{\piref}(x) -  r(x,y_2)  \right) (\Delta_{\pi_t}(x) - \Delta_{\piref}(x)) + \EE_{x} (\Delta_1(x) - \Delta_2(x))^2 \\
& = \EE_{x,y_1\sim \pi_t} \left(   f_t(x,y_1) - \Delta_{\pi_t}(x) -  r(x,y_1)  \right)^2 + \EE_{x,y_2\sim \piref} \left(  f_t(x,y_2) - \Delta_{\piref}(x) -  r(x,y_2) \right)^2  \\ 
& \qquad +  \EE_{x} (\Delta_{\pi_t}(x) - \Delta_{\piref}(x))^2  \leq \epsilon.
\end{align*}In the  above, we first complete the square, and then we only keep terms that are not necessarily zero. 
Since all the remaining three terms are non-negative, this concludes the proof.
\end{proof}

By the definition of $f_t$, we have $\Delta(x,y) =\frac{1}{\eta}  \ln \frac{ \pi_{t+1}(y|x)  }{ \pi_t(y|x) } - r(x,y)$. Taking $\exp$ on both sides, we get:
\begin{align*}
\forall x,y: \; \pi_{t+1}(y|x) = \pi_t(y|x) \exp\left(  \eta  ( r(x,y) + \Delta(x,y)  ) \right) = \frac{  \pi_t(y|x) \exp( \eta (  r(x,y) + \Delta(x,y) - \Delta_{\piref}(x) )  )  }{ \exp ( - \eta \Delta_{\piref}(x) ) }
\end{align*}
Denote $g_t(x,y) : = r(x, y) + \Delta(x,y) - \Delta_\piref(x)$, and the advantage $A_t(x,y) = g_t(x,y) - \EE_{y'\sim \pi_t(\cdot | x)}  g_t(x,y') $. 
We can  rewrite the above update rule as:
\begin{align}
\label{eq:our_policy_update}
\forall x,y: \; \pi_{t+1}(y | x) \propto \pi_t(y|x) \exp( \eta A_t(x,y) )
\end{align}
In other words, the algorithm can be understood as running MD on the sequence of $A_t$ for $t=0$ to $T-1$.  The following lemma is the standard MD regret lemma. 
\begin{lemma}\label{lemma:md} Assume $\max_{x,y,t}|A_t(x,y)| \leq A \in \RR^+$, and $\pi_0(\cdot | x)$ is uniform over $\Ycal$. Then with $\eta = \sqrt{ \ln(|\Ycal| )  / (A^2 T)}$,  for the sequence of policies computed by \alg, we have:
\begin{align*}
\forall \pi, x: \; \sum_{t=0}^{T-1}  \EE_{y\sim \pi(\cdot | x)} A_t(x,y) \leq 2 A \sqrt{  \ln(|\Ycal|) T}.
\end{align*}
\end{lemma}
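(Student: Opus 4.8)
The plan is to recognize the update in Eq.~\ref{eq:our_policy_update}, namely $\pi_{t+1}(y|x)\propto\pi_t(y|x)\exp(\eta A_t(x,y))$, as exactly the exponential-weights (Hedge) instantiation of mirror descent over the simplex $\Delta(\Ycal)$ with $\KL$ as the Bregman divergence, and then to run the standard potential-function regret argument context-by-context. The single observation that lets the left-hand side be the \emph{raw} sum $\sum_t\EE_{y\sim\pi}A_t(x,y)$ rather than a regret \emph{difference} is that $A_t$ is an advantage: by its definition $A_t(x,y)=g_t(x,y)-\EE_{y'\sim\pi_t(\cdot|x)}g_t(x,y')$, so $\EE_{y\sim\pi_t(\cdot|x)}A_t(x,y)=0$ for every $t$ and $x$. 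Hence the algorithm's own cumulative payoff vanishes and controlling the comparator's cumulative payoff coincides with controlling the regret.

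First I would fix an arbitrary context $x$ and introduce the normalizing partition $W_t(x)=\sum_y\pi_0(y|x)\exp(\eta\sum_{s=0}^{t-1}A_s(x,y))$, noting $W_0(x)=1$ since $\pi_0(\cdot|x)$ is a distribution, and $\pi_t(y|x)=\pi_0(y|x)\exp(\eta\sum_{s<t}A_s(x,y))/W_t(x)$. A one-line computation then gives the multiplicative recursion $W_{t+1}(x)/W_t(x)=\EE_{y\sim\pi_t(\cdot|x)}\exp(\eta A_t(x,y))$.

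Next I would bound each factor from above and $\ln W_T(x)$ from below. Because $A_t(x,\cdot)\in[-A,A]$ and $\EE_{y\sim\pi_t}A_t(x,y)=0$, Hoeffding's lemma applied with the centered range of width $2A$ yields $\EE_{y\sim\pi_t}\exp(\eta A_t(x,y))\leq\exp(\eta^2A^2/2)$, so $\ln(W_{t+1}(x)/W_t(x))\leq\eta^2A^2/2$; telescoping gives $\ln W_T(x)\leq\eta^2A^2T/2$. For the lower bound I would invoke the Gibbs variational principle (Donsker--Varadhan): for any comparator $\pi$, $\ln W_T(x)=\ln\EE_{y\sim\pi_0}\exp(\eta\sum_{s<T}A_s(x,y))\geq\eta\sum_{s<T}\EE_{y\sim\pi}A_s(x,y)-\KL(\pi(\cdot|x)\|\pi_0(\cdot|x))$. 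Combining the two bounds and dividing by $\eta$ gives $\sum_s\EE_{y\sim\pi}A_s(x,y)\leq\KL(\pi\|\pi_0)/\eta+\eta A^2T/2$, and since $\pi_0(\cdot|x)$ is uniform, $\KL(\pi(\cdot|x)\|\pi_0(\cdot|x))=\ln|\Ycal|-H(\pi(\cdot|x))\leq\ln|\Ycal|$.

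Finally I would substitute $\eta=\sqrt{\ln|\Ycal|/(A^2T)}$, which turns the two terms into $A\sqrt{\ln|\Ycal|T}$ and $\tfrac12 A\sqrt{\ln|\Ycal|T}$, summing to $\tfrac32 A\sqrt{\ln|\Ycal|T}\leq 2A\sqrt{\ln|\Ycal|T}$, and the bound holds uniformly over $x$ and $\pi$ since nothing above depended on the particular choice. I do not expect a genuine obstacle, as this is a textbook online-learning argument; the only points needing care are (i) correctly exploiting the zero-mean advantage property so that the stated sum is bounded directly rather than only its difference with the algorithm's own payoff, and (ii) applying Hoeffding's lemma with the centered range $[-A,A]$ of width $2A$ so that the exponent is $\eta^2A^2/2$ and not mis-scaled.
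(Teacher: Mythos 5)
Your proof is correct and in fact yields a slightly sharper constant ($\tfrac{3}{2}A\sqrt{\ln|\Ycal|\,T}$ versus the stated $2A\sqrt{\ln|\Ycal|\,T}$). Structurally it is the potential-function repackaging of the paper's argument: the paper takes the log of the update $\pi_{t+1}(y|x)=\pi_t(y|x)\exp(\eta A_t(x,y))/Z_t(x)$, takes expectations under the comparator $\pi$, and telescopes the resulting exact KL recursion, bounding each normalizer by $\ln Z_t(x)\le\eta^2A^2$ via $\exp(u)\le 1+u+u^2$ (valid for $u\le 1$) together with $\EE_{y\sim\pi_t(\cdot|x)}A_t(x,y)=0$. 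Your potential $W_T(x)$ is exactly $\prod_{t<T}Z_t(x)$, your Hoeffding step is the per-factor bound $\ln Z_t(x)\le\eta^2A^2/2$, and your Donsker--Varadhan step is equivalent to the paper's telescoped KL identity after dropping the nonnegative final term $\KL(\pi(\cdot|x)\,\|\,\pi_T(\cdot|x))$. Two genuine differences favor your route: first, Hoeffding's lemma holds for every $\eta>0$, whereas the paper's quadratic bound on the exponential requires $\eta A_t(x,y)\le 1$, i.e.\ $\eta\le 1/A$ --- a condition that the prescribed learning rate $\eta=\sqrt{\ln|\Ycal|/(A^2T)}$ satisfies only when $T\ge\ln|\Ycal|$ and which the lemma statement does not assume --- so your proof is unconditional; second, the extra factor $1/2$ from Hoeffding improves the final constant from $2$ to $3/2$. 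What the paper's presentation buys instead is the explicit per-step KL recursion, which makes the mirror-descent interpretation (small per-iteration KL movement) directly visible, a viewpoint the surrounding analysis leans on. Your identification of the zero-mean advantage property as the reason the raw comparator sum, rather than a regret difference, is bounded is precisely the same observation the paper exploits when killing the linear term in its bound on $\ln Z_t(x)$.
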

\begin{proof} For completeness, we provide the proof here. 
Start with $\pi_{t+1}(y | x) = \pi_t(y|x) \exp( \eta A_t(x,y) ) / Z_t(x)$ where $Z_t(x)$ is the normalization constant, taking log on both sides, and add $\EE_{y\sim \pi(\cdot | x)}$, we have:
\begin{align*}
-\text{KL}( \pi(\cdot | x) || \pi_{t+1}(\cdot | x)  ) = -\text{KL}( \pi(\cdot | x) ||\pi_{t}(\cdot | x) ) + \eta \EE_{y\sim \pi(\cdot | x)} A_t(x,y)  - \EE_{y\sim \pi(\cdot | x)} \ln Z_t(x).
\end{align*} Rearrange terms, we get:
\begin{align*}
- \text{KL}( \pi(\cdot | x) ||\pi_{t}(\cdot | x) )  + \text{KL}( \pi(\cdot | x) || \pi_{t+1}(\cdot | x)   = \EE_{y\sim \pi(\cdot | x)}\left[ - \eta A_t(x,y)  + \ln Z_t(x)  \right]
\end{align*}
For $\ln Z_t(x)$, using the condition that $\eta \leq 1/A$, we have $\eta A_t(x,y) \leq 1$, which allows us to use the inequality $\exp(x) \leq 1 + x + x^2$ for any $x\leq 1$, which lead to the following inequality:
\begin{align*}
\ln Z_t(x) &  = \ln \left( \EE_{y\sim\pi(\cdot | x)} \exp( \eta A_t(x,y))  \right) \\
& \leq \ln \left( \sum_{y} \pi_t(y | x) \left( 1 + \eta A_t(x,y) + \eta^2 A_t(x,y)^2 \right) \right) \\
& \leq \ln \left( 1 + 0 +  \eta^2 A^2  \right) \leq \eta^2 A^2,
\end{align*} where the last inequality uses $\ln(1+x) \leq x$, and we used the fact that $\EE_{y\sim \pi_t(x)} A_t(x,y) = 0$ due to the definition of advantage $A_t$. Thus, we have:
\begin{align*}
- \text{KL}( \pi(\cdot | x) ||\pi_{t}(\cdot | x) )  + \text{KL}( \pi(\cdot | x) || \pi_{t+1}(\cdot | x)  \leq - \EE_{y\sim \pi(\cdot | x)} [ A_t(x,y)] + \eta^2 A^2.  
\end{align*}
Sum over all iterations and do the telescoping sum, we get:
\begin{align*}
\sum_{t=0}^{T-1} \EE_{y\sim \pi(\cdot | x)} A_t(x,y) \leq \text{KL}( \pi(\cdot | x) || \pi_0(\cdot | x))  / \eta+ T \eta A^2 \leq \ln( | \Ycal |)  / \eta + T \eta A^2.
\end{align*} With $\eta =  \sqrt{ \ln(|\Ycal|)  / (A^2 T)}$, we conclude the proof. 
\end{proof} 

With the above, now we are ready to conclude the proof of the main theorem. 
\begin{proof}[Proof of Theorem~\ref{thm:main}] 
Consider a comparator policy $\pi^*$.  We start with the performance difference between $\pi^*$ and the uniform mixture policy $\bar\pi := \sum_{t=0}^{T-1} \pi_t / T$: 
\begin{align*}
\frac{1}{T}\sum_{t=0}^{T-1} \left( \EE_{x, y\sim \pi^*(\cdot | x)} r(x,y) -\EE_{x, y\sim \pi_t(\cdot | x)} r(x,y) \right) = \frac{1}{T}\sum_{t=0}^{T-1} \EE_{x, y\sim \pi^*(\cdot | x)} \left( A^{\pi_t}(x,y) \right),
\end{align*} where we define the real advantage $A^{\pi_t}(x,y) := r(x,y) -\EE_{y\sim \pi_t(\cdot | x)} r(x,y)$. Continue, we have:
\begin{align*}
&\frac{1}{T}\sum_{t=0}^{T-1} \EE_{x, y\sim \pi^*(\cdot | x)} \left( A^{\pi_t}(x,y) \right) \\
& = \frac{1}{T}\sum_{t=0}^{T-1} \EE_{x, y\sim \pi^*(\cdot | x)} \left( A_t(x,y) \right) + \frac{1}{T}\sum_{t=0}^{T-1} \EE_{x, y\sim \pi^*(\cdot | x)} \left( A^{\pi_t}(x,y)  - A_t(x,y) \right) \\
& \leq 2 A \sqrt{ \frac{\ln (|\Ycal|)  }{T}} +  \frac{1}{T} \sum_{t=0}^{T-1} \sqrt{ \EE_{x}  \EE_{y\sim \pi^*(\cdot |x )} ( A^{\pi_t}(x,y)  - A_t(x,y) )^2  }
\end{align*} where the last inequality uses Lemma~\ref{lemma:md}.  We now just need to bound $\EE_{y\sim \pi^*(\cdot |x )} ( A^{\pi_t}(x,y)  - A_t(x,y) )^2$. 
\begin{align*}
\EE_{x} \EE_{y\sim \pi^*(\cdot |x )} ( A^{\pi_t}(x,y)  - A_t(x,y) )^2  & = \EE_{x} \EE_{y\sim \piref(\cdot | x)} \frac{ \pi^*(y|x) } { \piref(y|x) }( A^{\pi_t}(x,y)  - A_t(x,y) )^2  \\
& \leq C_{\pi^*} \EE_{x, y\sim \piref(\cdot | x)} ( A^{\pi_t}(x,y)  - A_t(x,y) )^2
\end{align*} where the last inequality uses the definition of concentrability coefficient $C_{\pi^*}$.
We now bound $\EE_{x, y\sim \piref(\cdot | x)} ( A^{\pi_t}(x,y)  - A_t(x,y) )^2$. Recall the definiton of $A_t$ from Lemma~\ref{lemma:md}. 
\begin{align*}
&\EE_{x, y\sim \piref(\cdot | x)} ( A^{\pi_t}(x,y)  - A_t(x,y) )^2  \\
& =  \EE_{x, y\sim \piref(\cdot | x)} ( r(x,y) - \EE_{y'\sim \pi_t(\cdot | x)}  r(x,y' ) - g_t(x,y) + \EE_{y'\sim \pi_t(\cdot | x)} g_t(x,y') )^2 \\
& \leq 2 \EE_{x, y\sim \piref(\cdot|x)}\left( r(x,y) - g_t(x,y) \right)^2 + 2 \EE_{x} \EE_{y'\sim \pi_t(\cdot | x)} \left(  r(x,y') - g_t(x, y')  \right)^2
\end{align*}
Recall the $g_t(x,y) = r(x,y) + \Delta(x,y) - \Delta_\piref(x)$, and from Lemma~\ref{lemma:regression_results}, we can see that
\begin{align*}
\EE_{x,y\sim \piref(\cdot | x)} ( r(x,y) - g_t(x,y) )^2 = \EE_{x,y\sim \piref(\cdot | x)} ( \Delta(x,y) - \Delta_\piref(x) )^2  \leq \epsilon.
\end{align*}  
For $ \EE_{x} \EE_{y'\sim \pi_t(\cdot | x)} \left(  r(x,y') - g_t(x, y')  \right)^2$, we have:
\begin{align*}
 \EE_{x}  & \EE_{y'\sim \pi_t(\cdot | x)} \left(  r(x,y') - g_t(x, y')  \right)^2   =  \EE_{x} \EE_{y'\sim \pi_t(\cdot | x)} \left(  \Delta(x,y') - \Delta_\piref(x)  \right)^2 \\
 & =  \EE_{x} \EE_{y'\sim \pi_t(\cdot | x)} \left(  \Delta(x,y') -  \Delta_{\pi_t}(x) + \Delta_{\pi_t}(x) - \Delta_\piref(x)  \right)^2 \\
 & \leq 2  \EE_{x} \EE_{y'\sim \pi_t(\cdot | x)} \left(  \Delta(x,y') -  \Delta_{\pi_t}(x) \right)^2 + 2 \EE_{x}    \left(   \Delta_{\pi_t}(x) - \Delta_\piref(x)  \right)^2 \leq 4 \epsilon,
\end{align*}  where the last inequality uses Lemma~\ref{lemma:regression_results} again. This step relies on the  fact that one of the samples is always on-policy, i.e., from $\pi_t$. 

Combine things together, we can conclude that:
\begin{align*}
\EE_{x} \EE_{y\sim \pi^*(\cdot |x )} ( A^{\pi_t}(x,y)  - A_t(x,y) )^2 \leq C_{\pi^*} ( 10 \epsilon).
\end{align*}

Finally, for the regret, we can conclude:
\begin{align*}
&\frac{1}{T}\sum_{t=0}^{T-1} \EE_{x, y\sim \pi^*(\cdot | x)} \left( A^{\pi_t}(x,y) \right) \leq 2 A \sqrt{ \frac{\ln |\Ycal|}{ T } } + \frac{1}{T} \sum_t \sqrt{ C_{\pi^*} 10 \epsilon }  = 2 A \sqrt{ \frac{\ln |\Ycal|}{ T } } +  \sqrt{ C_{\pi^*} 10 \epsilon }. 
\end{align*}
\end{proof}

\newpage
\section{Extension of analysis to General Preferences}
\label{app:general_pref_analysis}

Extending the above analysis to the general preference case is straightforward except that it requires a stronger coverage condition. This is because we want to find a Nash Equilibrium, which requires a comparison between the learned policy against all the other policies. 
Results from the Markov Game literature \citep{cui2022offline,zhong2022pessimistic,NEURIPS2022_4cca5640,xiong2023nearly} and \cite{cui2022offline} have shown that the standard single policy coverage condition used in single-player optimization is provably not sufficient. In particular, they propose using a notion of {\em unilateral concentrability} for efficient learning, which can be defined as 
\begin{align*}
C_{\mathsf{uni},\mu}:=\max_{\pi,x,y,y''}\frac{\pimw(y|x)\pi(y''|x)}{\mu(y|x)\mu(y''|x)},
\end{align*}
in the general preference setting. Notably, the above unilateral concentrability coefficient $C_{\mathsf{uni},\mu}$ is equivalent to $C_{\mu}:=\max_{\pi,x,y}\frac{\pi(y|x)}{\mu(y|x)}$ since
 $C_{\mu}\leq C_{\mathsf{uni},\mu}\leq C_{\mu}^2$.
Therefore in the following discussion, we will use $C_{\mu}$ as the coverage condition.
In addition, we also assume the generalization error of the regression problem is small,  \looseness=-1
\begin{assum}[Regression generalization bounds for general preference] Over $T$ iterations, assume that for all $t$, we have:
\begin{align*}
&\EE_{x\sim \rho, y\sim \pi_t(\cdot | x), y'\sim \piref(\cdot | x)}\left( \frac{1}{\eta }\left(\ln\frac{ \pi_{\theta_{t+1}}(y|x)}{ \pi_{\theta_t}(y|x)} - \ln\frac{ \pi_{\theta_{t+1}}(y'|x)}{ \pi_{\theta_t}(y'|x)}  \right) -  \left( l(x,y,\pi_t)-l(x,y',\pi_t) \right)  \right)^2\leq\epsilon,
\end{align*} for some $\epsilon$.
\label{assm:generalization_general}
\end{assum}

Under the above coverage condition and generalization bound, we can show that \alg{}  is able to learn an approximate Minimax Winner:
\begin{theorem} 
\label{thm:general}
With assumption~\ref{assm:generalization_general}, after $T$ many iterations, with a proper learning rate $\eta$, the policy $\hpi=\Unif(\{\pi_t\}_{t=1}^T)$ satisfies that:
\begin{align*}
\DG(\hpi) \leq O\left(   \sqrt{ \frac{1}{ T } } +  \sqrt{ C_{\piref}  \epsilon }. \right).
\end{align*}
Here the $O$-notation hides problem-dependent constants that are independent of $\epsilon, C_{\piref}, T$.
\end{theorem}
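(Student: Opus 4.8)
The plan is to reuse the reduction-based machinery of Theorem~\ref{thm:main} almost verbatim, reinterpreting the general-preference update as Mirror Descent on a \emph{sequence of games} instead of a fixed reward. First I would record the game-theoretic reduction. By skew-symmetry of $l$, for any $\pi$ we have $l(\hpi,\pi) = -l(\pi,\hpi)$, so $\min_\pi l(\hpi,\pi) = -\max_\pi l(\pi,\hpi)$ and therefore
\[
\DG(\hpi) = \max_\pi l(\pi,\hpi) - \min_\pi l(\hpi,\pi) = 2\max_\pi l(\pi,\hpi) = 2\max_\pi \tfrac{1}{T}\textstyle\sum_{t=0}^{T-1} l(\pi,\pi_t),
\]
using $\hpi = \Unif(\{\pi_t\})$. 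Thus it suffices to bound the average external regret of the iterates against the best fixed comparator under the iteration-dependent payoffs $r_t(x,y) := l(x,y,\pi_t)$. Crucially, skew-symmetry gives $\EE_{y\sim\pi_t}r_t(x,y) = l(\pi_t,\pi_t) = 0$, so the ``real advantage'' under $r_t$ is just $A^{\mathrm{real}}_t(x,y) = r_t(x,y)$, and $\tfrac{1}{T}\sum_t l(\pi,\pi_t) = \tfrac{1}{T}\sum_t \EE_{x,y\sim\pi}A^{\mathrm{real}}_t(x,y)$.

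Next I would re-run the regression analysis of Lemma~\ref{lemma:regression_results} with $r$ replaced by $r_t$: since the Bayes-optimal regressor for the objective in Assumption~\ref{assm:generalization_general} is $l(x,y,\pi_t)-l(x,y',\pi_t) = r_t(x,y)-r_t(x,y')$ up to a $y$-independent shift, Assumption~\ref{assm:generalization_general} delivers the identical three in-distribution bounds (under $\pi_t$, under $\piref$, and on the difference of the shifts), each by $\epsilon$. Writing $f_t(x,y) = \tfrac{1}{\eta}\ln\tfrac{\pi_{t+1}(y|x)}{\pi_t(y|x)}$ and $g_t(x,y) = r_t(x,y) + \Delta(x,y) - \Delta_{\piref}(x)$ exactly as in Appendix~\ref{app:proof_of_main}, the update again takes the Mirror-Descent form $\pi_{t+1}(y|x)\propto\pi_t(y|x)\exp(\eta A_t(x,y))$ with $A_t = g_t - \EE_{y'\sim\pi_t}g_t$. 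Because $l\in[-1,1]$, each $r_t$ and hence each $A_t$ is uniformly bounded, so Lemma~\ref{lemma:md} applies and yields $\tfrac{1}{T}\sum_t\EE_{y\sim\pi}A_t(x,y)\le 2A\sqrt{\ln(|\Ycal|)/T}$ for every comparator $\pi$.

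I would then decompose $\tfrac{1}{T}\sum_t\EE_{y\sim\pi}A^{\mathrm{real}}_t = \tfrac{1}{T}\sum_t\EE_{y\sim\pi}A_t + \tfrac{1}{T}\sum_t\EE_{y\sim\pi}(A^{\mathrm{real}}_t - A_t)$ and bound the approximation term by importing the concentrability argument from Theorem~\ref{thm:main}: change measure from $\pi$ to $\piref$ at the price of a density ratio, then split $(A^{\mathrm{real}}_t - A_t)^2$ into an off-policy piece controlled by the $\piref$-bound and an on-policy piece controlled by the $\pi_t$-bound of the relabeled Lemma~\ref{lemma:regression_results}, giving $\EE_x\EE_{y\sim\pi}(A^{\mathrm{real}}_t - A_t)^2 \le O(\epsilon)$ times the relevant coverage coefficient. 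Taking the maximum over comparators $\pi$ in the regret forces the single-policy coefficient to its worst case, namely $C_{\piref} = \max_{\pi,x,y}\pi(y|x)/\piref(y|x)$; the remark preceding the theorem (and the equivalence $C_{\piref}\le C_{\mathsf{uni},\piref}\le C_{\piref}^2$) justifies that this is the correct coverage notion for the game setting. Combining, $\max_\pi\tfrac{1}{T}\sum_t l(\pi,\pi_t)\le 2A\sqrt{\ln(|\Ycal|)/T} + \sqrt{C_{\piref}\,O(\epsilon)}$, and multiplying by $2$ gives the claimed bound on $\DG(\hpi)$.

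The hard part will not be any single calculation — the regression and Mirror-Descent lemmas transfer mechanically — but rather handling the \emph{uniform} comparator correctly. Because the duality gap pits $\hpi$ against \emph{every} adversary at once, the change-of-measure step must hold for all $\pi$ simultaneously, so I must argue that $\sup_\pi C_{\piref\to\pi}$ is finite and equals $C_{\piref}$, and invoke $C_{\piref}\le C_{\mathsf{uni},\piref}\le C_{\piref}^2$ to tie this to the game-theoretically necessary unilateral coverage. This is the one place where the general-preference analysis genuinely requires a stronger assumption than the single-reward Theorem~\ref{thm:main}, and it is also where I would take care that the on-policy sample $y\sim\pi_t$ is what prevents a polynomial dependence on $|\Ycal|$ from creeping into the $\pi_t$-piece of the bound.
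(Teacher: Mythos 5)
Your proposal is correct and takes essentially the same route as the paper's own proof: reinterpret the update as Mirror Descent on the iteration-dependent rewards $r_t(x,y)=l(x,y,\pi_t)$, transfer Lemma~\ref{lemma:regression_results} and Lemma~\ref{lemma:md} verbatim, import the concentrability decomposition from Theorem~\ref{thm:main}, and use skew-symmetry both to zero the on-policy baseline ($\EE_{y\sim\pi_t}r_t(x,y)=0$) and to convert the one-sided regret bound into a duality-gap bound with the worst-case coefficient $C_{\piref}$. The only cosmetic difference is that you state the reduction $\DG(\hpi)=2\max_\pi l(\pi,\hpi)$ up front, whereas the paper derives the $\min_\pi l(\hpi,\pi)$ side via skew-symmetry at the end.
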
 
Note that the coverage condition here is much stronger than the single policy coverage condition in the RL setting. We conjecture that this is the cost one has to pay by moving to the more general preference setting and leaving the investigation of the necessarily coverage condition for future work.

\subsection{Proof of Theorem~\ref{thm:general}}
\label{proof:thm-general}
Recall that $r_t(x,y)=l(x,y,\pi_t)$. Let us define $\Delta^t(x,y):=f_t(x,y)-r_t(x,y)$, $\Delta^t_{\pi_t}(x):=\EE_{y\sim\pi_t(\cdot|x)}\Delta^t(x,y)$ and $\Delta^t_{\mu}(x):=\EE_{y\sim\piref(\cdot|x)}\Delta^t(x,y)$. Then following the same arguments in Lemma~\ref{lemma:regression_results}, we have
\begin{align}
\label{eq:general_1}
\EE_{x\sim\rho, y\sim \pi_t(\cdot | x)} \left[\left( f_t(x,y) -  r_t(x,y) - \Delta^t_{\pi_t}(x) \right)^2\right] \leq \epsilon,  
\end{align}
\begin{align}
\label{eq:general_2}
\EE_{x\sim\rho, y\sim \piref(\cdot | x)}  \left[\left( f_t(x,y) -  r_t(x,y) - \Delta^t_{\piref}(x) \right)^2 \right]\leq \epsilon,
\end{align}
\begin{align}
\label{eq:general_3}
\EE_{x\sim\rho} \left[\left( \Delta^t_{\pi_t}(x) - \Delta^t_{\piref}(x) \right)^2 \right]\leq \epsilon.
\end{align}
With slight abuse of the notation, We also use $g_t$ and $A_t(x,y)$ to denote $r_t(x,y)+\Delta^t(x,y)-\Delta^t_{\mu}(x,y)$ and $g_t(x,y) - \EE_{y'\sim \pi_t(\cdot | x)}  g_t(x,y') $. Then following the same arguments in Lemma~\ref{lemma:md},
\begin{align}
\label{eq:general_4}
\forall \pi, x: \; \sum_{t=0}^{T-1}  \EE_{y\sim \pi(\cdot | x)} A_t(x,y) \leq 2 A \sqrt{  \ln(|\Ycal|) T}.
\end{align}

Note that we have
\begin{align*}
&\max_{\pi}l(\pi,\hpi)=\max_{\pi}\frac{1}{T}\sum_{t=1}^Tl(\pi,\pi_t)\\
&\qquad=\max_{\pi}\frac{1}{T}\sum_{t=1}^T\EE_{x\sim\rho,y\sim\pi(\cdot|x)}[r_t(x,y)]=\max_{\pi}\frac{1}{T}\sum_{t=1}^T\EE_{x\sim\rho,y\sim\pi(\cdot|x)}[A^{t,\pi_t}(x,y)],
\end{align*}
where $A^{t,\pi_t}:=r_t(x,y)-\EE_{y\sim\pi_t(\cdot|x)}[r_t(x,y)]$. The last step is due to the skew symmetry of $l$, i.e., $\EE_{y\sim\pi_t(\cdot|x)}[r_t(x,y)]=l(x,\pi_t,\pi_t)=0$. Then by following the same arguments in the proof of Theorem~\ref{thm:main}, with \eqref{eq:general_1}\eqref{eq:general_2}\eqref{eq:general_3}\eqref{eq:general_4}, we have for any policy $\pi$,
\begin{align*}
\frac{1}{T}\sum_{t=0}^{T-1} \EE_{x\sim\rho, y\sim \pi(\cdot | x)} \left( A^{t,\pi_t}(x,y) \right) \leq 2 A \sqrt{ \frac{\ln |\Ycal|}{ T } } +  \sqrt{10 C_{\mu\to\pi} \epsilon }. 
\end{align*}
This implies that
\begin{align*}
\max_{\pi}l(\pi,\hpi)\leq\max_{\pi}\left(2 A \sqrt{ \frac{\ln |\Ycal|}{ T } } +  \sqrt{10 C_{\mu\to\pi}  \epsilon }\right)\leq2 A \sqrt{ \frac{\ln |\Ycal|}{ T } } +  \sqrt{ 10C_{\mu} \epsilon }.
\end{align*}

Note that due to the skew symmetry of $l$, we have
\begin{align*}
&\min_{\pi}l(\hpi,\pi)=\min_{\pi}\Ebb_{x\sim \rho,y\sim \hpi(\cdot|x), y'\sim\pi(\cdot|x)}\left[l(x,y,y')\right]=-\max_{\pi}\Ebb_{x\sim \rho,y\sim \hpi(\cdot|x), y'\sim\pi(\cdot|x)}\left[-l(x,y,y')\right]\\
&\qquad=-\max_{\pi}\Ebb_{x\sim \rho,y\sim \pi(\cdot|x), y'\sim\hpi(\cdot|x)}\left[l(x,y,y')\right]=-\max_{\pi}l(\pi,\hpi)\geq -2 A \sqrt{ \frac{\ln |\Ycal|}{ T } } - \sqrt{10 C_{\mu}\epsilon }.
\end{align*}
Therefore we have
\begin{align*}
\DG(\hpi)\leq 4A \sqrt{ \frac{\ln |\Ycal|}{ T } } +  2\sqrt{10 C_{\mu} \epsilon }.
\end{align*}

\newpage
\section{Additional Experiment Details}
\label{app:exp_detail}

\subsection{Summarization}

\subsubsection{Dataset Details}
\label{app:data_detail}

We present dataset details in Table~\ref{tab:data_detail}. Dataset available at \url{https://github.com/openai/summarize-from-feedback}

{\renewcommand{\arraystretch}{1.1}
\begin{table}[th]\centering
\caption{Dataset split, prompts, and maximum generation length for \textit{TL;DR} summarization\label{tab:data_detail}}
\resizebox{0.7\linewidth}{!}{
\begin{tabular}{cccc} 
\midrule[0.15ex]
Dataset & Train/Val/Test & Prompt & Generation Length \\  \midrule[0.05ex]
Human Reference & 117K/6.45K/6.55K &``TL;DR:'' & 53 \\
Preference & 92.9K/83.8K/- & ``TL;DR:'' & 53 \\
\midrule[0.15ex]
\end{tabular}}
\end{table}
}

\subsubsection{Model Details}
For SFT models, we train a Pythia 1.4B~\citep{biderman2023pythia}\footnote{HuggingFace Model Card: EleutherAI/pythia-1.4b-deduped} model for 1 epoch over the dataset with human references as labels, and use the existing fine-tuned 2.8B\footnote{HuggingFace Model Card: vwxyzjn/EleutherAI\_pythia-2.8b-deduped\_\_sft\_\_tldr} and 6.9B\footnote{HuggingFace Model Card: vwxyzjn/EleutherAI\_pythia-6.9b-deduped\_\_sft\_\_tldr} models. For reward models, we train a Pythia 1.4B parameter model for 1 epoch over the preference dataset and use the existing reward models with 2.8B\footnote{HuggingFace Model Card: vwxyzjn/EleutherAI\_pythia-2.8b-deduped\_\_reward\_\_tldr} and 6.9B\footnote{HuggingFace Model Card: vwxyzjn/EleutherAI\_pythia-6.9b-deduped\_\_reward\_\_tldr} parameters. For both \alg{} and baseline methods using 1.4B and 2.8B parameters, we trained the policy and/or the critic using \textbf{low-rank adapters (LoRA)}~\citep{hu2022lora} on top of our SFT and/or reward model respectively. For the 6.9B models, we perform \textbf{full-parameter} training. The 1.4B and 2.8B models are trained on 8 A6000 GPUs for one day and two days respectively. The 6.9B model is train on 8 H100 GPUs for two days.

\subsubsection{Baseline Implementation Details}
\label{app:imp_detail}

For supervised fine-tuning (SFT), reward modeling training, PPO, and DPO, we follow the implementation at \url{https://github.com/vwxyzjn/summarize_from_feedback_details} \citep{huang2024n+}. For iterative dpo, we implement as follows:

\begin{figure}[htb!]
\vspace{-2em}
\begin{algorithm}[H]
    \caption{Iterative DPO}
    \label{alg:iter_dpo}
    \begin{algorithmic}[1]
        \STATE \textbf{Input}: Reward $r$, policy class $\Pi = \{ \pi_\theta \}$, parameter $\beta$
        \STATE Initialize policy $\pi_{\theta_0}$.
        \FOR{$t = 0$ to $T-1$}
            \STATE Collect dataset $\Dcal_t = \{ x, y, y' \}$ where $x\sim \rho, y\sim \pi_t(\cdot | x), y'\sim \pi_t(\cdot | x)$
            \STATE Solve square loss regression problem: 
            \begin{align}
            \label{eq:iter_dpo}
            \theta_{t+1} = \argmin_{\theta} \sum_{(x,y,y')\in \Dcal_t} -\left[ \ln \sigma \left((\beta\ln\frac{ \pi_{\theta}(y|x)}{ \pi_{\theta_t}(y|x)} - \beta\ln\frac{ \pi_{\theta}(y'|x)}{ \pi_{\theta_t}(y'|x)})  sgn \left( r(x, y) - r(x, y') \right)  \right) \right]
            \end{align}
        \ENDFOR
    \end{algorithmic}
\end{algorithm}
\vspace{-2em}
\end{figure}
where $sgn$ is a sign function. Our implementation of iterative DPO is similar to \alg{} where, at each iteration, we update with respect to $\pi_{\theta_t}$. The major difference is that \alg{} regresses toward the differences in rewards while iterative DPO only utilizes the pairwise preference signal from the rewards.

\newpage
\subsubsection{Reward Details}
To ensure that $\pi_\theta$ remains close to $\pi_{\theta_0}$, we apply an additional KL penalty to the reward:
\begin{align}
r(x, y) = RM(x, y) - \gamma(\ln \pi_{\theta_t}(y|x) - \ln \pi_{\theta_0}(y|x))
\end{align}
where $RM(x, y)$ is score from the reward model given prompt $x$ and response $y$.
Furthermore, to ensure that the generations terminate within the maximum generation length, we penalize any generation that exceeds this length by setting $r(x, y)$ to a small fixed constant, $\Gamma$.

For \tldr{} summarization, we set $\gamma = 0.05$ and $\Gamma = -1$.

\subsubsection{Hyperparameter Details}
\label{app:hyper_detail}

\begin{table*}[htb!]\centering
\raggedright{\textbf{Parameter setting for \tldr{} summarization}}
\resizebox{\linewidth}{!}{
\begin{tabular}{p{0.25\linewidth}p{0.375\linewidth}p{0.375\linewidth}}
\midrule[0.3ex]
\textbf{Setting} &
\textbf{Parameters} & \\
\midrule[0.15ex]
SFT \& RM & 
batch size: 64 \newline
learning rate: 3e-6 &
schedule: cosine decay \newline
train epochs: 1 \\
\midrule[0.15ex]
PPO & 
batch size: 512 \newline
learning rate: 3e-6 \newline
schedule: linear decay \newline
train epochs: 1 \newline
num epochs: 4 & 
discount factor: 1 \newline
gae $\lambda$: 0.95 \newline
clip ratio: 0.2 \newline
value function coeff: 0.1 \newline
kl coefficient: 0.05 \\
\midrule[0.15ex]
DPO & 
batch size: 64 \newline
learning rate: 3e-6 \newline
schedule: linear decay &
train epochs: 1 \newline
$\beta$: 0.05 \\
\midrule[0.15ex]
RLOO & 
batch size: 512 \newline
learning rate: 3e-6 \newline
schedule: linear decay  & 
train epochs: 1 \newline
kl coefficient: 0.05 \newline
K: 2 or 4\\
\midrule[0.15ex]
REINFORCE & 
batch size: 512 \newline
learning rate: 3e-6 \newline
schedule: linear decay  & 
train epochs: 1 \newline
kl coefficient: 0.05 \\
\midrule[0.15ex]
Iterative DPO & 
batch size: 512 \newline
learning rate: 3e-6 \newline
schedule: linear decay \newline
train epochs: 1 & 
num epochs: 4 \newline
$\beta$: 0.05 \newline
kl coefficient: 0.05 \\
\midrule[0.15ex]
\alg & 
batch size: 512 \newline
learning rate: 3e-6 \newline
schedule: linear decay \newline
train epochs: 1 & 
num epochs: 4 \newline
$\eta$: 1.0 \newline
kl coefficient: 0.05 \\
\midrule[0.15ex]
LoRA Adapter \newline Config & 
r: 1024 \newline
$\alpha$: 2048 &
dropout: 0.0 \newline
bias: False \\
\midrule[0.15ex]
Generation &
sampling: true \newline
top k: 0.0 \newline
top p: 1.0 &
min length: 53 \newline
max new tokens: 53 \newline
temperature: 0.1 \\
\midrule[0.3ex]
\end{tabular}}
\end{table*}

\newpage
\subsubsection{Winrate Details}
\label{app:winrate_detail}

We are using \texttt{gpt-4-0613} checkpoint for winrate computations. Below we show the prompt for winrate evaluation and an example evaluation from GPT4. \\

\begin{table*}[htb]\centering
\raggedright{\textbf{Prompt for Winrate}}
\resizebox{\linewidth}{!}{
\begin{tabular}{p{1.1\linewidth}} 
\midrule[0.3ex]
Which of the following summaries does a better job of summarizing the most important points in the given forum post, without including unimportant or irrelevant details? Judge based on accuracy, coverage, and coherence.
\newline\newline
\#\#\# Post: \newline
\{\{post\}\}
\newline\newline
\#\#\# Summary A: \newline
\{\{summarya\}\}
\newline\newline
\#\#\# Summary B: \newline
\{\{summaryb\}\}
\newline\newline
\#\#\# Instructions: \newline
FIRST provide a one-sentence comparison of the two summaries, explaining which you prefer and why. SECOND, on a new line, state only ``A'' or ``B'' to indicate your choice. Your response should use the format:  \newline
Comparison: \textless one-sentence comparison and explanation \textgreater  \newline
Preferred: \textless ``A'' or ``B''\textgreater \\
\midrule[0.3ex]
\end{tabular}}
\end{table*}

\begin{table*}[htb!]\centering
\raggedright{\textbf{Example Evaluation from GPT4}}
\resizebox{\linewidth}{!}{
\begin{tabular}{p{0.2\linewidth}p{0.9\linewidth}} 
\midrule[0.3ex]
\textbf{Prompt} & 
SUBREDDIT: r/AskReddit
\newline\newline
TITLE: How do you get someone out of your head?
\newline\newline
POST: Hi,\newline
I'm 22, and I have been with my girlfriend for 5 years now. We recently moved together. We've always loved each other intensely.
\newline\newline
Problem, I recently started to have feelings for an other person (a friend). This person has had a boyfriend for now 3 years, and has absolutely no ideas. Those feelings were so strong, it was hard to hide them. After 2 months of me being distant and really sad, my girlfriend forced me to say what was bothering me. I'm not a good liar, and now she knows.
\newline\newline
We decided to give us a week alone, I went to my parents. 
\newline\newline
Now, I'm completely lost. I keep on thinking about this person, and I hate that. I would like for those feelings to go away, to leave me alone. But I can't.  
\newline\newline
What do I do? It's been 3 months now, and I'm just desperate.
\newline\newline
TL;DR: \\
\midrule[0.3ex]
\textbf{Reference} \newline
\textbf{(Summary A)} & 
long relationship; fell in love with an other person; admitted it; would like it to disappear, though it doesn't. \\
\midrule[0.3ex]
\textbf{\alg{}} \newline \textbf{Generation} \newline
\textbf{(Summary B)} & 
I recently started to have feelings for an other person (a friend). We decided to give us a week alone, I keep on thinking about that person, and I hate it. What do I do? \\
\midrule[0.3ex]
\textbf{Evaluation from GPT4} & 
Comparison: Summary A is too brief and rather disjointed, while Summary B more accurately conveys the emotional conflict portrayed in the forum post in a coherent manner. \newline
Preferred: B \\
\midrule[0.3ex]
\end{tabular}}
\end{table*}

\newpage
\subsection{General Chat}

\subsubsection{Dataset Details}
\label{app:data_detail_gc}

We present dataset details in Table~\ref{tab:data_detail_gc}.

{\renewcommand{\arraystretch}{0.8}
\begin{table}[th]\centering
\vskip -0.3 cm
\caption{Dataset details for General Chat\label{tab:data_detail_gc}}
\resizebox{0.7\linewidth}{!}{
\begin{tabular}{cccc} 
\midrule[0.15ex]
Dataset & Size & Prompt Length & Generation Length \\  \midrule[0.05ex]
Nectar\tablefootnote{HuggingFace Dataset Card: berkeley-nest/Nectar} & 183k & 1024 & 1024 \\
UltraFeedback\tablefootnote{HuggingFace Dataset Card: openbmb/UltraFeedback} & 64k & 1024 & 1024 \\
\midrule[0.15ex]
\end{tabular}}
\vskip -0.3 cm
\end{table}
}

\subsubsection{Model Details}

For OpenChat-3.5\footnote{HuggingFace Model Card: openchat/openchat\_3.5}, we only train the last four layers and keep other layers frozen. For Meta-Llama-3-8B-Instruct\footnote{HuggingFace Model Card: meta-llama/Meta-Llama-3-8B-Instruct}, we perform full-parameter training. For Starling-RM-7B-alpha\footnote{HuggingFace Model Card: berkeley-nest/Starling-RM-7B-alpha} and ArmoRM-Llama3-8B-v0.1\footnote{HuggingFace Model Card: RLHFlow/ArmoRM-Llama3-8B-v0.1}, we directly use the reward scores without any normalizations. We filter out prompts that are longer than $1,024$ tokens ($2.3\%$) to fit the input length. OpenChat-3.5 is trained for four days, and Meta-Llama-3-8B-Instruct is train for one day on 8 H100 GPUs. 

\subsubsection{Reward Details}
To ensure that $\pi_\theta$ remains close to $\pi_{\theta_0}$, we apply an additional KL penalty to the reward:
\begin{align}
r(x, y) = RM(x, y) - \gamma(\ln \pi_{\theta_t}(y|x) - \ln \pi_{\theta_0}(y|x))
\end{align}
where $RM(x, y)$ is score from the reward model given prompt $x$ and response $y$.
Furthermore, to ensure that the generations terminate within the maximum generation length, we penalize any generation that exceeds this length by setting $r(x, y)$ to a small fixed constant, $\Gamma$.

For the general chat experiments, we set $\Gamma = -4$.

\subsubsection{Hyperparameter Details}

\begin{table*}[htb!]\centering
\raggedright{\textbf{Parameter setting for General Chat}}
\resizebox{\linewidth}{!}{
\begin{tabular}{p{0.5\linewidth}p{0.5\linewidth}}
\midrule[0.3ex]
\textbf{Setting} &
\textbf{Parameters} \\
\midrule[0.15ex]
Base model: OpenChat-3.5 \newline
Reward Model: Starling-RM-7B-alpha \newline
Dataset: Nectar
& 
batch size: 32 \newline
learning rate: 1e-7 \newline
schedule: linear decay \newline
train epochs: 1 \newline
num epochs: 4 \newline
$\eta$: 1.0 \newline
$\gamma$: 0.05 \newline
$\Gamma$: -4 \\
\midrule[0.15ex]
Base model: Meta-Llama-3-8B-Instruct \newline
Reward Model: ArmoRM \newline
Dataset: UltraFeedback
& 
mini-batch size: 128 \newline
learning rate: 3e-7 \newline
schedule: cosine decay \newline
warm ratio: 0.1 \newline
train epochs: 1 \newline
iteration: 3 \newline
$\eta$: 1e6 (iter 1), 1e4 (iter 2), 1e2 (iter 3) \newline
$\gamma$: 0 \\
\midrule[0.3ex]
\end{tabular}}
\end{table*}

\newpage
\subsection{Image Generation}
\subsubsection{Dataset Details}

Generation prompts: cat, dog, horse, monkey, rabbit, zebra, spider, bird, sheep, deer, cow, goat, lion, tiger, bear, raccoon, fox, wolf, lizard, beetle, ant, butterfly, fish, shark, whale, dolphin, squirrel, mouse, rat, snake, turtle, frog, chicken, duck, goose, bee, pig, turkey, fly, llama, camel, bat, gorilla, hedgehog, kangaroo.

\subsubsection{Model Details}
We use the latent consistency model \citep{luo2023latent} distillation of the Dreamshaper v7 model \footnote{Huggingface model card: SimianLuo/LCM\_Dreamshaper\_v7} for our experiments. Experiments are conducted on 4 A6000 GPUs with each run requiring 10 hours.

\subsubsection{Hyperparameter Details}
\begin{table*}[htb!]\centering
\raggedright{\textbf{Parameter setting for Consistency Models}}
\resizebox{\linewidth}{!}{
\begin{tabular}{p{0.5\linewidth}p{0.5\linewidth}}
\midrule[0.3ex]
\textbf{Setting} &
\textbf{Parameters} \\
\midrule[0.15ex]
PPO
& 
advantage clip maximum: 10 \newline
batches per epoch: 10 \newline
clip range: 1e-4 \newline
learning rate: 1e-4 \newline
gradient accumulation steps: 8 \newline
max gradient norm: 5 \newline
number of epochs: 100 \newline
horizon: 8 \newline
number of sample inner epochs: 2 \newline
sample batch size (per GPU): 8 \newline
rolling statistics buffer size: 32 \newline
rolling statistics min count: 16 \newline
train batch size (per GPU): 2 \newline
LoRA rank: 8 \newline
Lora $\alpha$: 8  \\
\midrule[0.15ex]
REBEL
&
advantage clip maximum: 10 \newline
batches per epoch: 10 \newline
learning rate: 3e-4 \newline
$\eta$: 1 \newline
gradient accumulation steps: 8 \newline
max gradient norm: 5 \newline
number of epochs: 100 \newline
horizon: 8 \newline
number of sample inner epochs: 1 \newline
sample batch size (per GPU): 8 \newline
rolling statistics buffer size: 32 \newline
rolling statistics min count: 16 \newline
train batch size (per GPU): 1 \newline
LoRA rank: 8 \newline
Lora $\alpha$: 8 \\
\midrule[0.3ex]
\end{tabular}}
\end{table*}

\newpage
\section{Example Generations}
\label{app:exmp_gen}

\subsection{Summarization Generations from a 2.8B model}
\label{app:sum_gen}

\begin{table*}[htb!]\centering
\resizebox{\linewidth}{!}{
\begin{tabular}{p{0.15\linewidth}p{0.9\linewidth}} 
\midrule[0.3ex]
\textbf{Prompt} & 
SUBREDDIT: r/relationships
\newline\newline
TITLE: Me [18 M] with my college friend [19 F] about 1 month, I wanna hang out with her but I don't have a car.
\newline\newline
POST: This is the link for an earlier thread I posted about her 
\newline\newline
She basically doesn't have any friends and will spend all her spring break at home playing video games. I wanna hang out with her, but both of us don't have a car. I live with my grandma 10 minutes away from her house. Would it be weird to ask her to spend a day with us playing video games and eating pizza?
\newline\newline
Another thing I could do is ask her to go see a movie with me, but I would just meet her there since I'll be going with a bike. Do you guys think it's okay to do that? I also have 0 experience with girls. You can say this is the first girl in my life.
\newline\newline
TL;DR: \\
\textbf{Generation} & My college friend doesn't have any friends and spends all her spring break at home playing video games. I wanna hang out with her but neither of us have a car. Do you think it's okay to ask her to hang out?\\
\midrule[0.3ex]
\textbf{Prompt} & 
SUBREDDIT: r/AskReddit
\newline\newline
TITLE: Reddit, what kind of effect do you think that crime shows/movies might have in reducing the occurrence of thought-out, "intelligent" crime?
\newline\newline
POST: I had this thought a few days ago when watching an episode of NCIS and they did the usual "create a super high resolution image from a very low resolution image." I got to wondering how much of it is real and how much of it is fake (or, conversely, how much might be under-exaggerated). Based on these types of shows/movies (bank robbery, conning, etc.) where the bad guy nearly 100\% of the time gets caught, only an idiot would think it would be a good idea to do any of this stuff, and from my [limited] knowledge of the general criminal, I'm under the impression that they are mainly uneducated people in dire situations (or psychopaths/mentally unstable.. but I'm thinking mostly with regards the heist type of crime -- not assault or murder) that are driven to extreme lengths to make the "easy"/"quick" dollar. So I wondered, is it possible that these shows/movies that show the criminal, no matter how smart and creative, almost always getting caught could actually be (subconsciously or consciously) reducing the amount of more well thought through crime throughout the world?
\newline\newline
TL;DR:\\
\textbf{Generation} &  I had this thought recently watching NCIS episode where criminals almost always get caught, is it possible that these crime shows/movies that show the criminal almost always getting caught could actually be (subconsciously or consciously) reducing the crime?\\
\midrule[0.3ex]
\end{tabular}}
\end{table*}

\newpage
\subsection{Chat Generations from \alg{}-Llama-3}
\label{app:chat_gen}

\begin{table*}[htb!]\centering
\resizebox{\linewidth}{!}{
\begin{tabular}{p{0.15\linewidth}p{0.9\linewidth}} 
\midrule[0.3ex]
\textbf{Prompt} & Hi, I'd like to play ice hockey. Can you explain how the game works? \\ \\
\textbf{Generation} & Hockey enthusiast! I'd be happy to explain the basics of ice hockey. Here's a comprehensive overview to get you started: \newline\newline
\textbf{Objective:}\newline
The objective of ice hockey is to score more goals than the opposing team by using your stick to hit a puck (a rubber disc) into the opponent's goal.\newline\newline \textbf{Gameplay:} \newline1. \textbf{Game Structure:} A standard ice hockey game consists of three periods, each lasting 20 minutes.\newline2. \textbf{Teams:} Each team consists of six players on the ice at a time:
\begin{itemize}
    \item \textbf{Forwards} (3): Players who score goals and create scoring opportunities.
    \item \textbf{Defensemen} (2): Players who defend their goal and prevent the opposing team from scoring.
    \item \textbf{Goalie} (1): The player responsible for preventing the puck from entering the goal.
\end{itemize} 
3. \textbf{Gameplay Flow:} The game starts with a face-off, where two players from opposing teams try to gain control of the puck after it's dropped onto the ice by the referee. Players can move the puck using their sticks, skates, or by carrying it. \newline
4. \textbf{Scoring:} A goal is scored when a player hits the puck into the opponent's goal using their stick, skate, or body. \newline
5. \textbf{Penalties:} If a player breaks the rules, they can be sent to the penalty box for a set amount of time (2-5 minutes). During this time, their team must play shorthanded (with one fewer player on the ice).\newline
6. \textbf{Power Play:} If the opposing team scores during a penalty, the penalized team gets a power play, where they have a man advantage on the ice.\newline
7. \textbf{Icing:} If a player shoots the puck from behind their own blue line and it crosses the opponent's goal line without being touched, it's considered icing, and the opposing team is awarded a face-off in their own end zone.\newline
8. \textbf{Offside:} A player cannot enter the opponent's zone before the puck, or they'll be called offside.\newline\newline
\textbf{Common Penalties:}
\begin{itemize}
    \item \textbf{Hooking:} Using the stick to hook an opponent's leg or body.
    \item \textbf{Holding:} Grabbing onto another player's jersey or equipment.
    \item \textbf{Tripping:} Using a stick, leg, or body to trip an opponent.
    \item \textbf{Interference:} Interfering with an opponent who doesn't have possession of the puck.
\end{itemize}
\textbf{Equipment:}
\begin{itemize}
    \item \textbf{Stick:} A long, flat stick used to pass, shoot, and control the puck.
    \item \textbf{Skates:} Specialized ice skates designed for speed, agility, and stopping.
    \item \textbf{Pads:} Protective gear worn by players, including shoulder pads, elbow pads, gloves, shin guards, and hockey pants.
    \item \textbf{Helmet:} A protective helmet worn by all players to prevent head injuries.
    \item \textbf{Goalie Gear:} Specialized equipment worn by goalies, including a large pad covering their body, gloves, and a mask.
\end{itemize}
\textbf{Basic Strategies:} 
\begin{itemize}
    \item \textbf{Forechecking:} Applying pressure to the opponent's defensemen to gain possession of the puck.
    \item \textbf{Backchecking:} Pressuring the opponent's forwards to prevent them from scoring.
    \item \textbf{Breakouts:} Quickly moving the puck from your own end zone to the neutral zone to create scoring opportunities.
\end{itemize}
This is a solid foundation to get you started. As you play more, you'll learn more about the game's nuances and strategies. Have fun playing ice hockey!\\
\midrule[0.3ex]
\end{tabular}}
\end{table*}

\newpage
\begin{table*}[htb!]\centering
\resizebox{\linewidth}{!}{
\begin{tabular}{p{0.15\linewidth}p{0.9\linewidth}} 
\midrule[0.3ex]
\textbf{Prompt} & Why do people give Reddit Gold to admins? \\ \\
\textbf{Generation} & 
On Reddit, Reddit Gold (also known as \"Gold\") is a virtual currency that allows users to show appreciation for content creators, moderators, and other users who contribute positively to the community. When someone gives Reddit Gold to an admin (or moderator), it's often a way to express gratitude for their efforts in maintaining the community's quality, resolving issues, and keeping the discussion civil.\newline\newline
Here are some reasons why people might give Reddit Gold to admins:\newline
1. \textbf{Appreciation for moderation}: Admins work tirelessly to ensure that discussions remain on-topic, respectful, and follow community guidelines. Giving Gold acknowledges their dedication and hard work in maintaining a positive environment.\newline
2. \textbf{Recognition of expertise}: Admins often possess in-depth knowledge about specific topics or communities. Giving Gold shows appreciation for their expertise and willingness to share it with others.\newline
3. \textbf{Support for community building}: Admins play a crucial role in fostering engagement, encouraging participation, and creating opportunities for users to connect with each other. Gold gifts demonstrate support for their efforts in building a strong community.\newline
4. \textbf{Rewarding good behavior}: When admins demonstrate excellent behavior, such as resolving conflicts fairly or providing helpful guidance, users might give Gold as a way to recognize and reward their good behavior.\newline
5. \textbf{Boosting morale}: Giving Gold can be a morale booster for admins, acknowledging their value to the community and motivating them to continue their excellent work.\newline\newline
It's worth noting that Reddit Gold is not required or expected, and admins don't receive any direct benefits from receiving Gold. However, the gesture of appreciation can go a long way in fostering a positive and supportive community.
\\
\midrule[0.3ex]
\end{tabular}}
\end{table*}

\newpage 
\subsection{Image Generations}
\label{app:img_gen}

Example Generations of \alg{}
\begin{figure}[h]
    \centering
    \includegraphics[width=\textwidth]{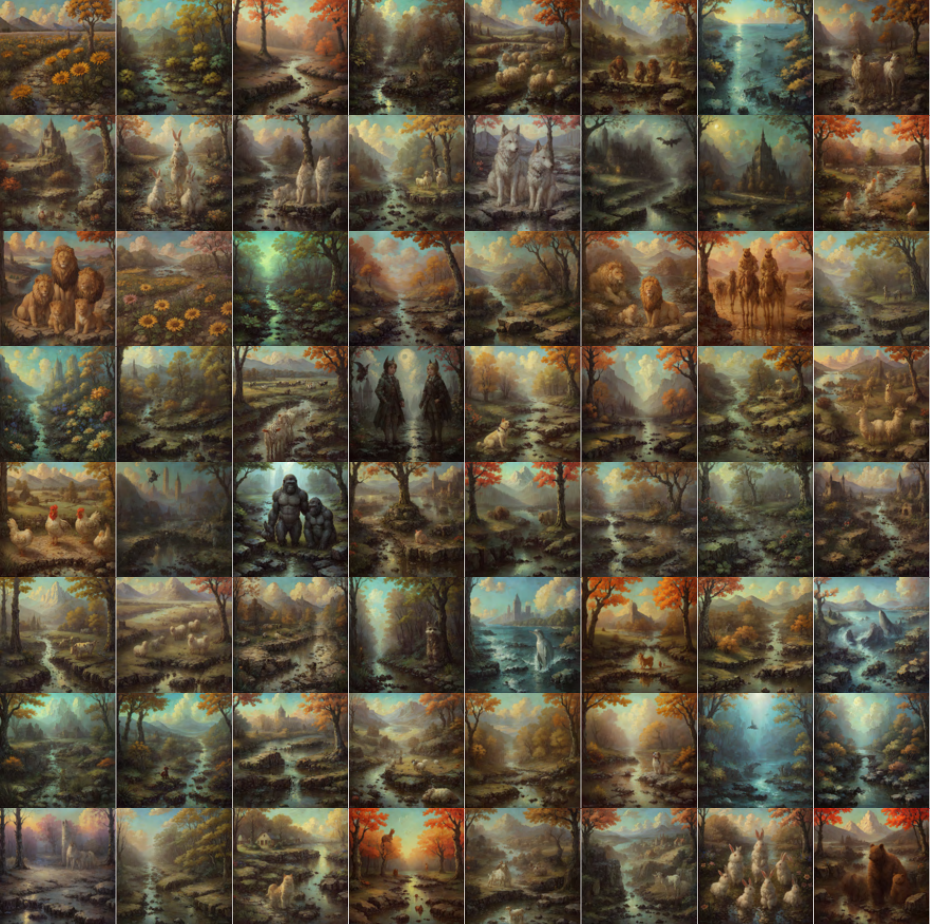}
\end{figure}

\newpage 
\begin{figure}[h]
    \centering
    \includegraphics[width=\textwidth]{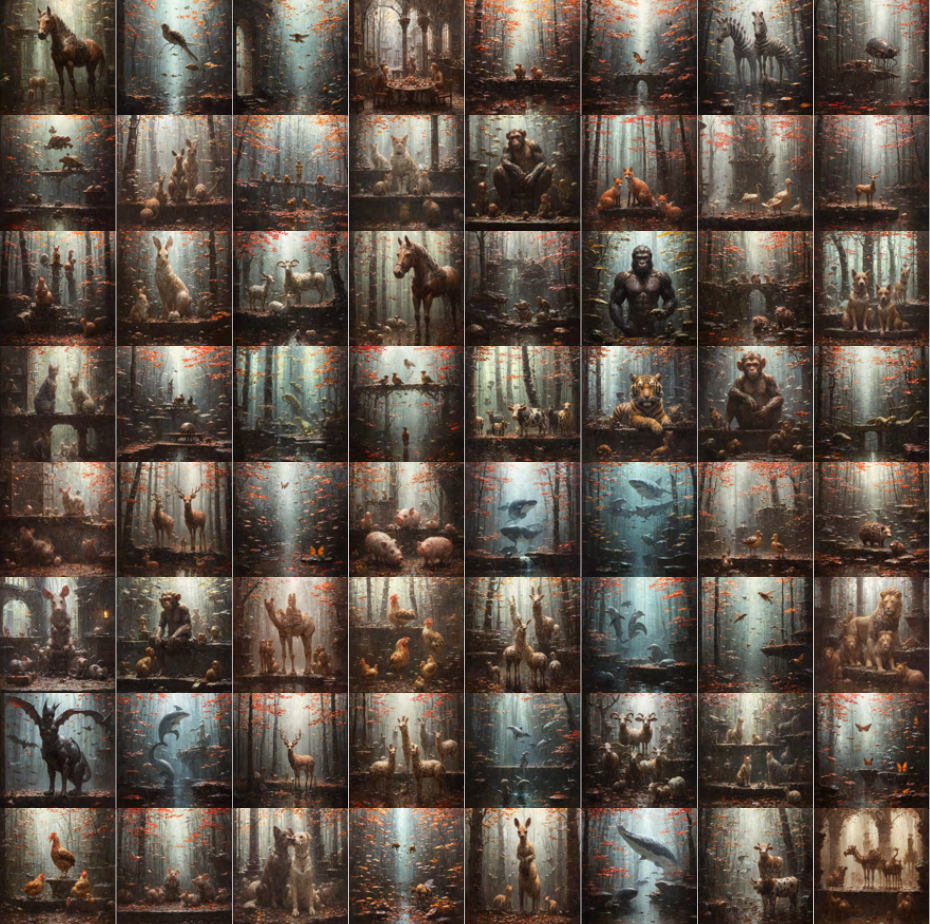}
\end{figure}

\newpage
\section{Ablation Analysis}
\label{app:ablation}

{\renewcommand{\arraystretch}{1.1}
\begin{table}[th]\centering
\resizebox{0.53\linewidth}{!}{
\begin{tabular}{ccccc} 
\midrule[0.15ex]
$\eta$ & Winrate ($\uparrow$) & RM Score ($\uparrow$) & KL($\pi || \pi_{ref}$) ($\downarrow$)
\\  \midrule[0.05ex]
0.3 & 55.5\% & 1.37 & 10.4 \\
0.7 & 59.9\% & 1.60 & 14.2 \\
1.0 & 70.2\% & 2.44 & 29.0 \\
2.0 & 62.5\% & 1.76 & 16.9 \\
\midrule[0.15ex]
\end{tabular}}
\caption{\alg{} ablation of the key hyperparameter $\eta$ on summarization task and 2.8B model. The best-performing $\eta$ for each metric is highlighted in bold. \label{tab:ablate}}
\end{table}}

Just like DPO, tuning \alg{} is much more straightforward than PPO since the only hyperparameter \alg{} introduced is $\eta$. We investigate how sensitive \alg{} is to learning rate $\eta$ in the loss. The results of ablation on summarization task and 2.8B model is shown in Table~\ref{tab:ablate} with the same setting detailed in Appendix~\ref{app:hyper_detail} except for $\eta$. \alg{} achieves the best performance when $\eta=1$, while increasing or decreasing $\eta$ leads to decreased performance. Our result here indicates that $\eta$ is an important hyperparameter that requires tuning for achieving a good performance. Setting $\eta$ to $1.0$ is a good starting point since, for all of our experiments from language modeling to image generation, $\eta = 1$ achieves the best results. 

\section{Trade-off between Reward Model Score and KL-divergence}
\label{app:kl_score}

\begin{figure}[ht!]
\centering
\includegraphics[scale=0.55,trim={0 0 0 0},clip]{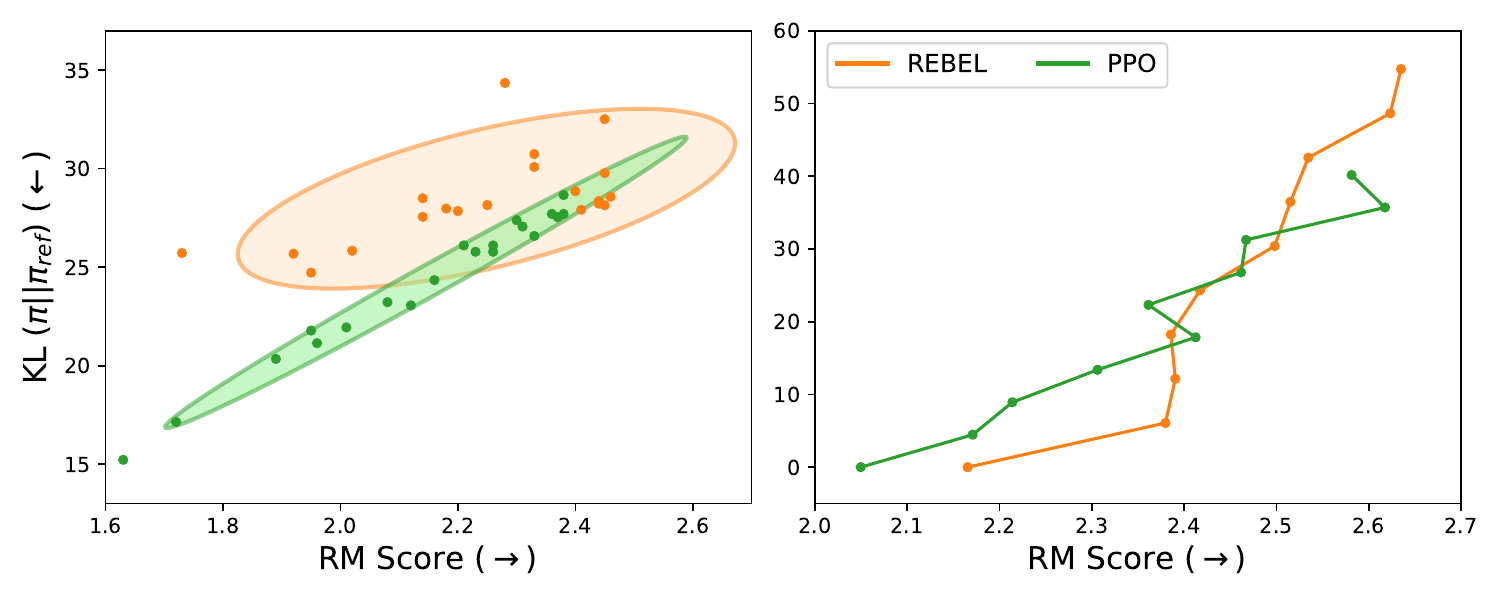}
\vskip -0.3cm
\caption{Plot of Reward vs KL-Divergence for 2.8B \alg{} and PPO for summarization. We evaluate the models across the entire test set every 100 steps for 2,000 steps. Left: each point represents the average reward score and KL-divergence for a specific time step; the eclipse represents the confidence interval with 2 standard deviations. Right: we divide the KL distribution at the 2,000-step into $10$ bins with equal size and average the corresponding RM scores in each bin.}
\label{fig:kl_score}
\end{figure}

The trade-off between the reward model score and KL-divergence is shown in Figure~\ref{fig:kl_score}. We evaluate the 2.8B \alg{} and PPO every 400 gradient updates during training for 8,000 updates on summarization. The sample complexity of each update is held constant across both algorithms for fair comparison. For the left plot, each point represents the average divergence and score over the entire test set, and the eclipse represents the confidence interval with 2 standard deviations. As observed previously, PPO exhibits lower divergence, whereas \alg{} shows higher divergence but is capable of achieving larger RM scores. Notably, towards the end of the training (going to the right part of the left plot), \alg{} and PPO have similar KL and RM scores. For the right plot in Figure~\ref{fig:kl_score}, we analyze a single checkpoint for each algorithm at the end of training. For each algorithm, we group every generation from the test set by its KL distribution into $10$ equally sized bins and calculate the average of the corresponding RM score for each bin. We can see that \alg{} achieves higher RM scores for generations with small divergence while requiring larger divergence for generations with the highest scores.

\newpage
\section{Regression Loss During Training}
\label{app:loss_curve}
\begin{figure}[h]
    \centering
    \includegraphics[scale=0.7]{./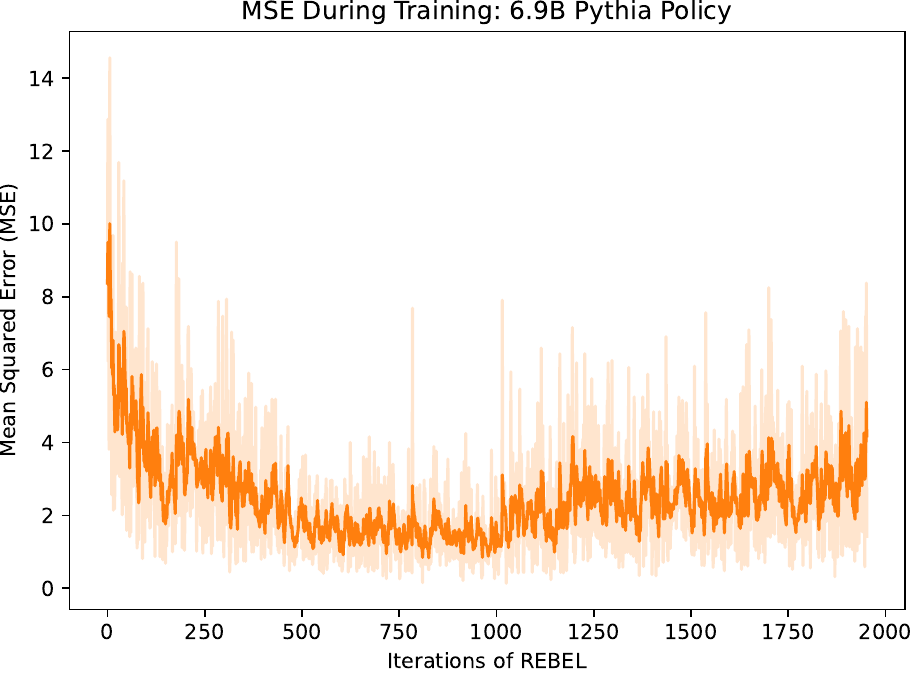}
    \caption{\alg{}'s reward difference prediction error throughout training of our 6.9B parameter policy on the summarization task. The reward used for this task is unbounded with the range of values of the human labels in the validation set being $[-6.81, 7.31]$. We plot both the smoothed values with a moving average and the loss vales at each iteration.}
    \label{fig:loss_curve}
\end{figure}

Figure~\ref{fig:loss_curve} shows the observed loss of Eq.~\ref{eq:least_square} that we observed when finetuning the 6.9B Pythia model on summarization. We see that \alg{} minimizes the loss throughout training maintaining a relatively low mean squared error given that our observed rewards were mostly between $[-10, 10]$. Note that our learned reward model, however, is unbounded.

\section{Breakdown of MT-Bench}
\label{app:mt_bench}

\begin{figure}[h]
\centering
\begin{subfigure}[t]{.5\textwidth}
  \includegraphics[scale=0.5,trim={-55 100 0 50},clip]{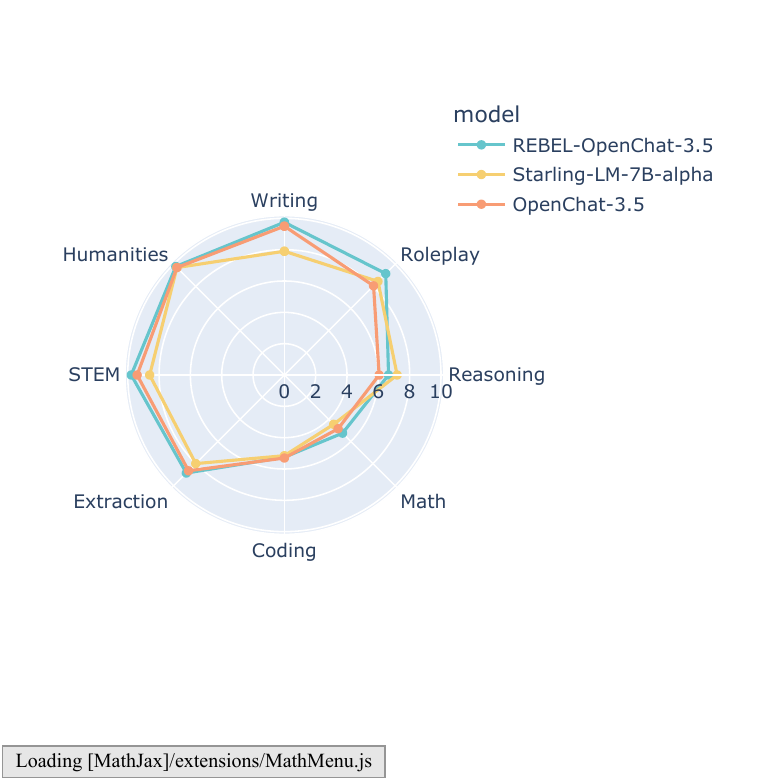}
  \label{fig:openchat_mt_bench}
\end{subfigure}%
\caption{Breakdown of MT-Bench results over eight dimensions.}
\label{fig:breakdown_mt_bench}
\label{fig:test}
\end{figure}

Figure~\ref{fig:breakdown_mt_bench} shows the breakdown of MT-Bench results. \alg{} (\alg{}-OpenChat-3.5) outperforms both APA (Starling-LM-7B-alpha) and base (OpenChat-3.5) models on six out of eight dimensions including writing, roleplay, math, extraction, STEM, and humanities. 

\end{document}